%% file: neurips_2026.tex
\documentclass{article}

\usepackage[preprint]{neurips_2026}

\usepackage[utf8]{inputenc} 
\usepackage[T1]{fontenc}    
\usepackage{hyperref}       
\usepackage{url}            
\usepackage{booktabs}       
\usepackage{amsfonts}       
\usepackage{nicefrac}       
\usepackage{microtype}      
\usepackage{xcolor}         
\usepackage{amsmath, amsthm, amssymb}
\usepackage{booktabs}           
\usepackage{xcolor}             
\usepackage{placeins}
\theoremstyle{plain}
\newtheorem{proposition}{Proposition}
\newtheorem{corollary}{Corollary}
\theoremstyle{definition}
\newtheorem{definition}{Definition}
\newtheorem{assumption}{Assumption}
\theoremstyle{remark}
\newtheorem{remark}{Remark}
\usepackage{tikz}
\usetikzlibrary{arrows.meta,positioning}
\newcommand{\SCMfigscale}{0.33}
\usepackage{wrapfig}

 \usepackage{booktabs}
  \usepackage{multirow}
  \usepackage{graphicx} 

\usepackage{graphicx} 
\usepackage{tikz}
\usetikzlibrary{arrows.meta} 
\usepackage{amsmath} 

\title{Entangled by Design: Spurious Intra-Variable Signal Routing in Tabular In-Context Learners}

\author{%
  Athanasios Vlontzos \\
  Hologen AI \\
  London, UK \\
  \And
  Giorgos Papanastasiou \\
  Mathematics Research Centre \\Academy of Athens  \\
  Athens, GR \\
   \AND
  Bernhard Kainz \\
  FAU Erlangen Nuremberg \\
  Imperial College London\\ 
  Erlangen, DE \& London, UK \\
  \And
  Sotirios Tsaftaris \\
  University of Edinburgh \\
  Edinburgh, UK \\
}

\begin{document}

\maketitle
\input{abstract}
\input{section1_introduction}
\input{section2_related_work}
\input{section3_setup}
\input{section4_theory}
\input{section5_experiments}
\input{section6_robustness}
\input{section7_mitigations}
\input{section9_conclusion}
\bibliographystyle{abbrvnat}

\bibliography{refs}
\newpage
\input{appendix}
\newpage

\newpage
\input{checklist.tex}

\end{document}

%% file: abstract.tex

\begin{abstract}
Consider a model trained at a single hospital to predict patient recovery,
where the measured feature $X$ bundles the patient's true health signal ($C$)
with a systematic artefact from that hospital's equipment ($S$).
Within that hospital, the artefact correlates with outcomes through unmeasured
confounders such as patient demographics; an in-context learner rationally
routes predictions through $S$, not $C$, and fails silently when deployed at
a new hospital with different equipment.
We formalise this as \emph{spurious routing in composite representations}:
when a feature $X = [C;\,\alpha S;\,\eta]$ encodes a causal signal $C$ and
a spurious signal $S$ in distinct subspaces, the ICL cannot determine which
drives predictions.
We prove that under ridge ICL, a linear in-context learner, this routing
is unavoidable regardless of context size; TabPFN, a state-of-the-art
pretrained tabular ICL model, shows qualitatively consistent behaviour
empirically.
We derive a closed-form characterisation, $\mathrm{CSR} \propto \rho_S/\rho_C$,
confirmed at $r = 0.997$ for linear ICL and $r = 0.979$ for TabPFN.
Contrary to intuition, larger context sharpens commitment to the dominant
in-context signal, amplifying spurious routing by up to $1.74\times$; in the
high-spurious corner, more expressive models show greater vulnerability empirically
($+2.22$ CSR gap at high entanglement).
We introduce two lightweight mitigations: environment-stratified context construction
and S-swap augmentation, that require only weak environment labels and no knowledge
of the causal partition.
S-swap reduces spurious routing by $74\%$ for linear ICL and $98.8\%$ for TabPFN,
with TabPFN's causal sensitivity increasing $8.4\times$ simultaneously: the model
does not become agnostic, it reroutes through the causal signal.
\end{abstract}

%% file: section1_introduction.tex

\section{Introduction}
\label{sec:intro}
\input{fig_scm}
In-context learning (ICL) has emerged as a powerful paradigm for tabular
prediction: given a set of labelled examples as context, a model such as
TabPFN~\citep{hollmann2023tabpfn} can fit a high-quality predictor at inference
time, without gradient updates or access to the full training distribution.
This flexibility is precisely what makes ICL attractive for scientific and
biomedical applications, where labelled data arrives in heterogeneous batches
and fast adaptation is essential.

Yet this flexibility comes with a blind spot.
An in-context learner has no mechanism to evaluate \emph{why} a feature predicts
the target, only \emph{that} it does within the provided examples.
Figure~\ref{fig:scm} shows the structural causal model underlying our
analysis: an unobserved confounder $U$ drives both a spurious component $S$
and the outcome $Y$, creating an in-context correlation
$\mathrm{Corr}(S,Y)\neq 0$ that vanishes under intervention.
The observed feature $X{=}[C;\alpha S;\eta]$ conflates the true causal
signal $C$ with this spurious component $S$, so the model routes
predictions through $S$, committing no error by any in-context criterion,
yet failing silently the moment the $S$--$Y$ correlation shifts.


Consider a concrete example.
A model is trained at a single hospital to predict patient recovery, where
the measured feature $X$ bundles the patient's true health signal ($C$,
what causally drives recovery) with a systematic artefact from that
hospital's equipment ($S$, a measurement offset that varies by site);
such measurement heterogeneity is known to induce model miscalibration
at deployment even when underlying biology is unchanged~\citep{debray2019changing}.
Within that hospital, the equipment artefact correlates with outcomes
through unmeasured confounders, i.e., patient demographics that determine
both which ward a patient is admitted to and how they respond to treatment.
An ICL trained on that hospital's data routes through $S$, not $C$; when
deployed at a new hospital with different equipment, predictions collapse---
a specific instance of the broader dataset shift problem documented in
clinical AI~\citep{subbaswamy2020deployment}.
The same structure recurs across domains: gene expression measurements
where biological response is mixed with sequencing-run artefacts that
can confound clinical outcomes when correlated with study
design~\citep{leek2010batch}, clinical biomarkers blending physiology
with instrument drift, and any feature vector that bundles a causal
signal with a site-specific confound.

\noindent\textbf{Spurious routing in composite representations.}
We formalise this failure as \emph{spurious routing in composite
representations}.
The setting we study is one where an observed feature vector $X$ encodes a
causally invariant component $C$ and a spurious component $S$ in
\emph{distinct subspaces}: $X = [C;\, \alpha S;\, \eta]$.
This is distinct from the cross-variable spurious correlation studied in
invariant learning~\citep{arjovsky2019irm, peters2016causal}, where
separate features $X_1$ and $X_2$ are independently causal or spurious.
Proposition~\ref{prop:d1} shows that when $C$ and $S$ are fully entangled
in a single scalar coordinate ($D=1$), any perturbation-based routing
diagnostic is unidentifiable: the model weight cancels in the ratio.
Separated subspaces are the minimal identifiable setting.
In practice, this structure is partially induced by harmonisation methods
(e.g.\ ComBat, MMD minimisation), which redistribute signal and artefact
into approximately distinct feature groups, which is a common preprocessing step
in genomics and multi-site clinical studies.
We analyse this separated-coordinate regime throughout.
\begin{itemize}
  \item 
    We identify and formalise spurious routing in composite representations
    as a failure mode of in-context learning distinct from cross-variable
    spurious correlation, and introduce the Causal Sensitivity Ratio (CSR)
    as a perturbation-based diagnostic that quantifies routing in the minimal
    identifiable setting of separated causal/spurious subspaces
    (Section~\ref{sec:theory}).

  \item 
    We prove that under in-context ridge regression---a linear ICL model
    in which the predictor is the ridge estimator computed over the context
    window~\citep{akyurek2022learning}---with orthogonal causal and spurious
    subspaces,
    $\mathrm{CSR}^{\mathrm{pop}} = \frac{\alpha^2}{\alpha^2+\lambda}
    \cdot \frac{\rho_S}{\rho_C} \cdot (1+\lambda)\kappa(\delta)$,
    and that spurious routing is unavoidable under in-context confounding
    for any regularisation strength
    (Proposition~\ref{prop:main},
    Corollaries~\ref{cor:mono}--\ref{cor:persist}).

  \item 
    We confirm all theoretical predictions on a controlled structural causal
    model. The Pearson correlation between observed CSR and the empirical
    predictor $\hat\rho_S/\hat\rho_C$ reaches $r{=}0.997$ for linear ICL
    and $r{=}0.979$ for TabPFN. We further show that (i)~larger context
    amplifies routing in the high-spurious regime ($+1.74\times$), and
    (ii)~TabPFN exceeds linear ICL in causal alignment across the majority
    of settings, yet is more vulnerable than linear ICL in the
    high-entanglement corner ($+2.22$ CSR gap, concentrated in 15 of 64
    grid settings) (Section~\ref{sec:experiments}).

  \item 
    We propose two lightweight interventions---environment-stratified context
    construction and S-swap augmentation---that reduce spurious routing by
    up to $98.8\%$ on TabPFN and recover out-of-distribution accuracy without
    requiring knowledge of the causal partition
    (Section~\ref{sec:mitigations}).
\end{itemize}

%% file: fig_scm.tex
\begin{wrapfigure}{r}{\SCMfigscale\linewidth}
  \vspace{-26pt}
  \centering
  \resizebox{\linewidth}{!}{
  \begin{tikzpicture}[
      >=Stealth,
      font=\sffamily,
      node distance=1.5cm,
      causal/.style={circle, draw=teal!80!black, very thick, fill=teal!5, minimum size=1.4cm, align=center},
      unobs/.style={circle, draw=gray, very thick, dashed, fill=gray!5, minimum size=1.4cm, align=center},
      spurious/.style={circle, draw=brown!80!black, very thick, fill=brown!5, minimum size=1.4cm, align=center},
      composite/.style={rectangle, rounded corners, draw=blue!70!black, very thick, fill=blue!5, minimum width=4.5cm, minimum height=1.2cm, align=center},
      outcome/.style={circle, draw=violet, very thick, fill=violet!5, minimum size=1.4cm, align=center}
  ]

  \node[composite] (X) at (0,0) {\textbf{X} = [\textbf{C} ; $\alpha$\textbf{S} ; $\eta$] \\[-0.2ex] \small observed composite};
  \node[causal] (C) at (-3.8, 2.5) {\textbf{C} \\[-0.2ex] \small causal};
  \node[spurious] (S) at (3, 2.5) {\textbf{S} \\[-0.2ex] \small spurious};
  \node[unobs] (U) at (0, 5) {\textbf{U} \\[-0.2ex] \small unobserved};
  \node[outcome] (Y) at (4.5, -0.5) {\textbf{Y} \\[-0.2ex] \small outcome};

  \draw[->, draw=gray, thick] (C) to (X.165);
  \draw[->, draw=gray, thick] (S) to node[pos=0.6, left=0.1cm, text=gray] {\footnotesize $\times \alpha$} (X.15);
  \draw[->, draw=gray, thick] (U) to (S);
  \draw[->, draw=gray, thick, dashed] (U) to[out=-10, in=80] (Y);

  \draw[->, draw=teal!80!black, very thick] (C) to[out=25, in=135] node[midway, below=0.2cm, text=teal!80!black] {causal path} (Y);

  \draw[->, draw=orange!80!red, thick, dashed] (S) to[bend left=15] node[midway, right=0.1cm, text=orange!80!red, align=left] {\small in-ctx \\[-0.5ex] \small corr.} (Y);

  \node[text=orange!80!red, font=\itshape, below=0.3cm of X] {ICL routes through S when $\operatorname{Corr}(S,Y) \neq 0$};

  \end{tikzpicture}
  }
  \caption{\small SCM: $U$ confounds $S\!\to\!Y$
    in-context; ICL sees $X{=}[C;\alpha S;\eta]$
    and routes through $S$.}
  \label{fig:scm}
  \vspace{-15pt}
\end{wrapfigure}

%% file: section2_related_work.tex

\section{Related Work}
\label{sec:related}

\noindent\textbf{In-context learning theory.}
The capacity of transformers to learn from context has been studied from
algorithmic, Bayesian, and statistical perspectives.
\citet{garg2022transformers} showed empirically that transformers trained
from scratch can match optimal least-squares estimators on linear regression,
establishing the simple-function-class framework that underpins our linear
ICL baseline.
\citet{xie2022icl} offered a theoretical account of this phenomenon as
implicit Bayesian inference over latent concepts, while
\citet{akyurek2022learning} and \citet{vonoswald2023transformers} proved
constructively that transformers can implement gradient descent and ridge
regression in-context.
For tabular data specifically, \citet{muller2022pfn} introduced Prior-Data
Fitted Networks (PFNs), the architecture underlying
TabPFN~\citep{hollmann2023tabpfn}, which approximate posterior predictive
distributions by meta-training on synthetic priors.
A key assumption shared across this line of work is that the in-context
distribution is \emph{informative}: examples are drawn from the task of
interest and their labels are generated by the same mechanism that governs
the query.
We show that when the context is confounded (when spurious correlations
contaminate the in-context signal via a shared unmeasured confounder), these
otherwise near-optimal learners systematically route through the wrong
signal, and that this failure is provably irreducible by standard means.

\noindent\textbf{Invariant learning and causal representations.}
The problem of learning predictors that generalise across environments has
been studied extensively in the invariant learning
literature~\citep{peters2016causal, arjovsky2019irm}.
Invariant Causal Prediction~\citep{peters2016causal} identifies causal
predictors by requiring that residuals are identically distributed across
environments; IRM~\citep{arjovsky2019irm} relaxes this to a gradient-based
penalty applicable at training time.
Both approaches assume that spurious and causal features occupy
\emph{distinct} input coordinates: the cross-variable setting.
The composite-representation setting we study, where a single observed feature
$X = C + \alpha S$ encodes causal and spurious signals in distinct subspaces,
falls outside the scope of these methods: neither environment
stratification nor gradient penalties can disentangle $C$ from $S$ when
both enter $X$ and neither is directly observed.
A related challenge arises in causal effect estimation for structured
treatments, where \citet{corcoll2024contrastive} show that contrastive
representation learning can isolate causal factors within a composite
observable, a complementary approach to our perturbation-based diagnostic.
Our work brings the invariance perspective to in-context learners,
characterising the failure theoretically and proposing context-level
mitigations that require no access to the causal partition and no
modification to the model.

\noindent\textbf{Counterfactual and spurious-attribute augmentation.}
S-swap augmentation is conceptually related to counterfactual data
augmentation~\citep{kaushik2020learning}, which constructs
training pairs that differ only in a specified attribute while holding
the label fixed, to reduce reliance on spurious cues.
The key distinction is the setting: prior work applies augmentation at
\emph{training time} to update model weights, whereas our S-swap operates at
\emph{context construction time} and requires no model update, making it
directly applicable to pretrained in-context learners such as TabPFN.
Our theoretical analysis of \emph{why} the augmentation works (driving
in-context $\mathrm{Corr}(S, Y)$ toward zero at the point level) also
provides a mechanistic account absent from prior empirical treatments.

\noindent\textbf{Domain generalisation baselines.}
Methods such as GroupDRO~\citep{sagawa2020distributionally},
VREx~\citep{krueger2021out}, and density-weighted invariance approaches
address distribution shift by reweighting training losses across environments.
These methods operate at \emph{training time} and require gradient access;
they cannot be applied to frozen pretrained models like TabPFN.
The proposed S-swap and env-stratified are the natural analogues in the ICL regime,
where the context window is the only intervention point.
Similarly, Mixup-based environment synthesis~\citep{yan2020improve} mixes
feature representations across environments at training time; our S-swap
achieves conceptually analogous decorrelation directly in the context,
without modifying the model or requiring access to the training procedure.
The key contribution of this work is identifying that the ICL constraint
(no weight updates, context is the only lever) demands a fundamentally
different class of intervention.

%% file: section3_setup.tex
\section{Setup}
\label{sec:setup}

\noindent\textbf{Data-generating process.}
We model a setting where observations arrive in \emph{environments}
(e.g.\ hospitals or clinical sites), each with its own systematic
technical variation.
Formally, each environment $e \in \mathcal{E}$ is indexed by a scalar
shift $\mu_e \sim \mathcal{N}(0, \sigma_e^2)$, capturing how site $e$
differs from the average (e.g.\ the calibration offset of that
hospital's measurement equipment).
An unmeasured confounder $U \sim \mathcal{N}(0,1)$ represents a hidden
factor that shapes both the technical conditions and the outcome within
each site (e.g.\ patient demographics that determine both which ward
a patient is admitted to and how they respond to treatment).
Within environment $e$, we generate $N$ samples as:
\begin{equation}
\begin{aligned}
  C   &\sim \mathcal{N}(0, I_{D_C}), \\
  S   &= \beta_S U + \sqrt{1-\beta_S^2}\,\varepsilon_S + \mu_e,
        \quad \varepsilon_S \sim \mathcal{N}(0, I_{D_S}), \\
  X   &= [C;\, \alpha S;\, \eta], \quad \eta \sim \mathcal{N}(0, I_{D_N}), \\
  Y   &= \tfrac{1}{D_C}\mathbf{1}^\top C + \beta_Y U + \varepsilon,
        \quad \varepsilon \sim \mathcal{N}(0,\sigma^2).
\end{aligned}
\notag
\end{equation}

\noindent\textbf{Notation.}
$[\,;\,]$ denotes vector concatenation (not addition):
$[C;\,\alpha S;\,\eta]$ stacks $C \in \mathbb{R}^{D_C}$,
$\alpha S \in \mathbb{R}^{D_S}$, and $\eta \in \mathbb{R}^{D_N}$
into a single observed vector of total dimension $D = D_C + D_S + D_N$.
We use $N$ for the number of samples generated per environment and
$n \leq N$ for the number of labelled examples passed to the in-context
learner as context (the \emph{context size}).

Here $C$ is the \emph{causal signal} (e.g.\ the patient's true underlying
health state), $S$ is the \emph{spurious component} (e.g.\ the
equipment-induced measurement artefact, which shifts with environment
$\mu_e$ and is partially driven by $U$), $\eta$ is uninformative noise,
and $\alpha > 0$ controls how strongly $S$ is represented in $X$
relative to $C$ (the \emph{entanglement coefficient}).
The model sees all $D$ dimensions of $X$ but does not know which
subspace is causal.
The outcome $Y$ depends only on $C$ and $U$, not on $S$ directly.

Three properties follow by construction.
First, $\mathrm{Cov}(C, S) = 0$: causal and spurious signals occupy orthogonal
subspaces (A\ref{asm:ortho}), as when a patient's health state is independent
of the measuring equipment.
Second, $\mathrm{Cov}(S, Y) = \beta_S \beta_Y \neq 0$ \emph{within} environments:
the artefact $S$ correlates with $Y$ only because $U$ drives both (A\ref{asm:conf})
;  the confounded signal an ICL will exploit.
Third, $\mathrm{do}(S{=}s)$ does not change $Y$: any predictive weight on $S$
is spurious.

\noindent\textbf{In-context learning.}
Given a labelled context $\mathcal{D}_n = \{(X_i, Y_i)\}_{i=1}^n$ of
$n$ examples drawn from a single training environment, an in-context
learner $f$ produces predictions $f(\mathcal{D}_n, X_q)$ for query
points $X_q$ without gradient updates.
We study two instantiations: \emph{linear ICL}, a model that solves a
ridge regression problem directly from the labelled context examples at
prediction time, with no learned weights and no gradient
updates~\citep{akyurek2022learning}, using regularisation $\lambda = 10^{-3}$,
and \emph{TabPFN}~\citep{hollmann2023tabpfn}, a pretrained transformer that
approximates a Bayesian posterior over tabular tasks, used here in regression
mode via quantile-binned continuous targets with
\texttt{n\_ensemble\_configurations=4}.

\noindent\textbf{The routing problem.}
Because $X$ encodes $C$ and $S$ in distinct subspaces, an ICL can route
predictions through $C$, $S$, or any mixture, and it has no way to
tell them apart from within-environment statistics alone.
Within a single hospital, routing through $S$ looks rational: the
equipment artefact $S$ predicts outcome $Y$ just as well as the true
health signal $C$, because demographics $U$ drives both.
The harm is invisible until deployment at a new hospital, where $\mu_e$
shifts (different equipment calibration) and the spurious $S$--$Y$
correlation breaks down, while the causal $C$--$Y$ relationship holds.

\noindent\textbf{Causal Sensitivity Ratio.}
The formal definition and closed-form analysis of CSR appear in
Section~\ref{sec:theory} (Definition~\ref{def:csr}).
Operationally: draw query $X_q$ from a test environment and replacement
components $S', C'$ from a held-out training environment; CSR is the ratio
of mean absolute prediction shift under $S$-swap to mean absolute shift under
$C$-swap, with all other dimensions held fixed.
A value near zero indicates causal routing; a value above one indicates the
model is more sensitive to spurious than causal variation.

\noindent\textbf{Practical estimation of CSR.}
In our synthetic experiments, $C$ and $S$ are known by construction and
CSR is computed exactly; this is the standard evaluation design for
diagnostic metrics in causal representation learning, used here to
validate theory, not as a deployment assumption.
In practice, $C$ and $S$ are latent: only the composite $X$ is observed.
The decomposition must be estimated; for instance, via batch harmonisation
(ComBat, MMD minimisation), which redistributes signal and artefact into
approximately distinct feature groups, or via environment-contrastive methods
that exploit multi-site structure to separate invariant from site-specific
components (Section~\ref{sec:mitigations}).
Once estimated, CSR inherits robustness from its ratio structure: noise on
both $\hat{C}$ and $\hat{S}$ scales numerator and denominator approximately
equally, so rank orderings are preserved; Spearman $r = 0.849$ and Pearson
$r$ drops only $0.014$ across all tested estimation error levels up to
$\phi{=}60^\circ$ (Appendix~\ref{app:decomp}).

\noindent\textbf{Biological and clinical instantiation.}
In ML terms, this composite feature structure arises whenever training data
arrives in heterogeneous sites and a measured feature bundles a task-relevant
signal with a site-specific confound.
Two examples.
\emph{Multi-site clinical prediction:} each hospital is an environment;
$X$ is a patient measurement (e.g.\ a blood panel or imaging-derived
biomarker); $C$ is the patient's true physiological state; $S$ is the
systematic offset introduced by that site's equipment or assay protocol;
and $U$ is an unmeasured demographic or socioeconomic factor that jointly
determines which hospital the patient attends and their health outcome $Y$.
\emph{Perturb-seq gene expression}~\citep{norman2019exploring}: each
experimental batch is an environment; $C$ is the perturbation-induced
regulatory response; $S$ is the batch artefact from reagent lot or
sequencing run; and $U$ is unmeasured sample acquisition covariates that
drive both $S$ and $Y$\cite{papanastasiou2025confounder}.
Standard harmonisation tools (ComBat, scVI) reduce $\alpha S$ but leave
residual overlap at the $\theta{=}10^\circ$--$30^\circ$ level characterised
in Section~\ref{sec:robustness}, and our results show it may intensify routing.
Full experimental configuration and DGP parameterisation are in
Appendix~\ref{app:experiments}.

%% file: section4_theory.tex

\section{Theory}
\label{sec:theory}

We characterise spurious routing in two steps.
We first show that the problem is unidentifiable when $C$ and $S$ share a single
input coordinate (Proposition~\ref{prop:d1}), establishing the minimal setting in which
routing can be studied.
We then state three assumptions, define the CSR metric, and derive a closed-form
expression for routing magnitude under those assumptions
(Proposition~\ref{prop:main}).
Proof sketches are given in the main text; full derivations are in
Appendix~\ref{app:proofs}.

\subsection{Assumptions, metric, and main result}
\label{sec:theory-main}

We state three assumptions that govern the analysis.
Each is named explicitly because it does visible work in the proofs that follow.

\begin{assumption}[Ridge ICL: A\ref{asm:ridge}]
\label{asm:ridge}
The in-context estimator implements population ridge regression:
$w^* := (\Sigma_X + \lambda I)^{-1}\Sigma_{XY}$, with $\lambda > 0$,
$\Sigma_X := \mathrm{Cov}(X)$, $\Sigma_{XY} := \mathrm{Cov}(X, Y)$.
\end{assumption}

\begin{assumption}[Orthogonal subspaces: A\ref{asm:ortho}]
\label{asm:ortho}
$\mathrm{Cov}(C, S) = 0$.
\end{assumption}

\begin{assumption}[In-context confounding: A\ref{asm:conf}]
\label{asm:conf}
A shared unmeasured confounder $U$ induces $\mathrm{Cov}(S, Y) \neq 0$ within
each training environment.
\end{assumption}

\noindent
A\ref{asm:ridge} is exact for linear ICL and a limiting approximation for
transformers~\citep{akyurek2022learning, vonoswald2023transformers}.
A\ref{asm:ortho} makes $\Sigma_X$ block-diagonal so causal and spurious weights
decouple; it holds by DGP construction and is stress-tested in
Section~\ref{sec:robustness}.
A\ref{asm:conf} makes $w_S^* \neq 0$ despite $S$ being non-causal: the path
$S \leftarrow U \rightarrow Y$ renders $S$ predictive in-context.

\begin{definition}[Causal Sensitivity Ratio]
\label{def:csr}
Let $(C,S)$ and $(C',S')$ be i.i.d.\ draws from the query and perturbation
environments respectively; set $\Delta C := C'-C$, $\Delta S := S'-S$.
For $f_w(C,S) = w_C C + w_S(\alpha S)$, the \emph{Causal Sensitivity Ratio} is
\begin{equation}
  \mathrm{CSR}(w)
  \;:=\;
  \frac{\mathbb{E}[|w_S\,\alpha\,\Delta S|]}{\mathbb{E}[|w_C\,\Delta C|]}.
  \label{eq:csr}
\end{equation}
$\mathrm{CSR} \approx 0$: model ignores $S$.
$\mathrm{CSR} \approx 1$: equal sensitivity to $S$ and $C$.
$\mathrm{CSR} > 1$: model is more sensitive to $S$ than $C$: the failure regime.
CSR requires no knowledge of the causal mechanism, only access to the $(C, S)$
decomposition for perturbation.
The \emph{protocol constant} $\kappa(\delta) := \mathbb{E}|\Delta S|/\mathbb{E}|\Delta C|$
depends only on the between-environment mean shift $\delta := \mu_{E'} - \mu_E$;
its closed form is given in Appendix~\ref{app:proofs}.
In our experiments $\delta = 1.5$, giving $\kappa \approx 1.516$.
\end{definition}


We note a boundary case: the routing problem requires $D \geq 2$; a structural degeneracy that motivates
the separated-coordinate setting.

\begin{proposition}[D=1 cancellation]
\label{prop:d1}
Let $X = C + \alpha S \in \mathbb{R}$ and $f_w(X) = wX$ with $w \neq 0$.
Under causal and spurious swap perturbations (Definition~\ref{def:csr}),
\[
  \mathrm{CSR} \;=\; \alpha\,\frac{\mathbb{E}|\Delta S|}{\mathbb{E}|\Delta C|},
\]
independent of $w$, $\rho_S$, and $\rho_C$.
\end{proposition}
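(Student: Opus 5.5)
The plan is to apply Definition~\ref{def:csr} directly, specialised to the scalar model. With $X = C + \alpha S$ and $f_w(X) = wX$, the same weight $w$ multiplies both components, so $f_w(C,S) = wC + w(\alpha S)$. This is the form in Definition~\ref{def:csr} with the identification $w_C = w_S = w$. The key structural point, and the reason $D=1$ is degenerate, is that no separate causal and spurious weights exist: one scalar weight is shared.

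Next I would compute the two prediction shifts. Under the spurious swap $S \mapsto S'$ with $C$ held fixed, the prediction changes by $f_w(C+\alpha S') - f_w(C+\alpha S) = w\alpha\,\Delta S$. Under the causal swap $C \mapsto C'$ with $S$ held fixed, it changes by $w\,\Delta C$. Taking absolute values and expectations, and using that $w$ and $\alpha>0$ are deterministic constants, gives $\mathbb{E}|w\alpha\Delta S| = |w|\,\alpha\,\mathbb{E}|\Delta S|$ and $\mathbb{E}|w\Delta C| = |w|\,\mathbb{E}|\Delta C|$. Forming the ratio, the factor $|w|$ cancels because $w\neq 0$, leaving
\[
  \mathrm{CSR} = \alpha\,\frac{\mathbb{E}|\Delta S|}{\mathbb{E}|\Delta C|},
\]
as claimed.

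Finally I would justify the independence claim. The quantities $\mathbb{E}|\Delta S|$ and $\mathbb{E}|\Delta C|$ depend only on the marginal laws of $(C,S)$ and $(C',S')$ in the query and perturbation environments. In the DGP of Section~\ref{sec:setup}, these laws involve $\beta_S$, the noise, and the shift $\delta$, but not the model weight. The in-context correlations $\rho_S, \rho_C$ can affect the CSR only through the fitted weight, for example through the ridge solution under A\ref{asm:ridge}. Since the weight has cancelled, the CSR carries no dependence on $w$, $\rho_S$, or $\rho_C$. The ratio $\mathbb{E}|\Delta S|/\mathbb{E}|\Delta C|$ is exactly $\kappa(\delta)$, so $\mathrm{CSR} = \alpha\kappa(\delta)$.

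The main obstacle is bookkeeping rather than analysis. The denominator must be nonzero, which holds whenever $\Delta C$ is nondegenerate, as it is here since $C$ and $C'$ are independent Gaussians. The expectations must also be finite, which holds for Gaussian differences. I would state both conditions explicitly.
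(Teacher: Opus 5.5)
Your proposal is correct and follows the paper's proof. Both read off $\delta\hat{y}_S = w\alpha\,\Delta S$ and $\delta\hat{y}_C = w\,\Delta C$ from Definition~\ref{def:csr} and cancel the nonzero weight; you are slightly more careful in factoring out $|w|$ and checking that the denominator is finite and nonzero. On your independence remark, note that under the DGP $\mathrm{Var}(S) = \beta_S^2 + (1-\beta_S^2) = 1$, so $\Delta S \sim \mathcal{N}(\delta,2)$ does not depend on $\beta_S$ (hence not on $\rho_S$), and the remaining ratio is exactly $\kappa(\delta)$.
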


\begin{proof}
By Definition~\ref{def:csr}, $\delta\hat{y}_S = w\alpha\Delta S$ and
$\delta\hat{y}_C = w\Delta C$.
Since $w \neq 0$, it cancels identically in the ratio, giving
$\mathrm{CSR} = \alpha\,\mathbb{E}|\Delta S|/\mathbb{E}|\Delta C|$.
\end{proof}

When $C$ and $S$ are entangled in a single coordinate, $w$ cancels in the
CSR ratio: the perturbation protocol cannot diagnose \emph{how} the model routes,
only \emph{that} it predicts.
Routing becomes identifiable only when $C$ and $S$ occupy distinct subspaces.
 Partial overlap
 is the norm in practice (Table~\ref{tab:p3-robustness})

\begin{proposition}[Population routing under separated coordinates]
\label{prop:main}
Let $X = [C;\,\alpha S] \in \mathbb{R}^{D_C+D_S}$, $\alpha > 0$,
$\mathrm{Var}(C) = \mathrm{Var}(S) = 1$, $D_C = D_S = 1$.
Define $\rho_C := \mathrm{Cov}(C,Y)$ and $\rho_S := \mathrm{Cov}(S,Y)$.
Under Assumptions~A\ref{asm:ridge}--A\ref{asm:conf}, the population ridge weights are
\begin{equation}
  w_C^* \;=\; \frac{\rho_C}{1+\lambda},
  \qquad
  w_S^* \;=\; \frac{\alpha\,\rho_S}{\alpha^2+\lambda},
  \label{eq:weights}
\end{equation}
and the population CSR (Definition~\ref{def:csr}) is
\begin{equation}
  \mathrm{CSR}^{\mathrm{pop}}
  \;=\;
  \underbrace{\frac{\alpha^2}{\alpha^2+\lambda}}_{\text{entanglement}}
  \;\cdot\;
  \underbrace{\frac{\rho_S}{\rho_C}}_{\text{signal ratio}}
  \;\cdot\;
  \underbrace{(1+\lambda)\,\kappa(\delta)}_{\text{protocol const.}}.
  \label{eq:csr-pop}
\end{equation}
\end{proposition}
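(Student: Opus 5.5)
The plan is a direct computation in two steps: first obtain the ridge weights from A\ref{asm:ridge} using the block structure supplied by A\ref{asm:ortho}, then substitute them into Definition~\ref{def:csr}.

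\textbf{Step 1: the weights.} With $X = [C;\alpha S]$ and $\mathrm{Var}(C)=\mathrm{Var}(S)=1$, I compute
\[
\Sigma_X = \begin{pmatrix} 1 & \alpha\,\mathrm{Cov}(C,S) \\ \alpha\,\mathrm{Cov}(C,S) & \alpha^2 \end{pmatrix}.
\]
By A\ref{asm:ortho} the off-diagonal entries vanish, so $\Sigma_X + \lambda I = \mathrm{diag}(1+\lambda,\ \alpha^2+\lambda)$. Its inverse is diagonal, and it is well defined since $\lambda>0$. The cross-covariance is $\Sigma_{XY} = (\mathrm{Cov}(C,Y),\ \mathrm{Cov}(\alpha S, Y))^\top = (\rho_C,\ \alpha\rho_S)^\top$, using bilinearity of covariance. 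Multiplying gives \eqref{eq:weights} coordinatewise. A\ref{asm:conf} then ensures $\rho_S\neq 0$, and hence $w_S^*\neq 0$.

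\textbf{Step 2: the CSR.} The model is $f_w(C,S)=w_C C + w_S(\alpha S)$, whose weight on the raw $S$ coordinate is $w_S^*\alpha$. In \eqref{eq:csr} the weights are deterministic scalars, so they factor out of the expectations:
\[
\mathrm{CSR}(w^*) = \frac{|w_S^*|\,\alpha\,\mathbb{E}|\Delta S|}{|w_C^*|\,\mathbb{E}|\Delta C|} = \frac{|w_S^*|\,\alpha}{|w_C^*|}\,\kappa(\delta).
\]
Inserting \eqref{eq:weights} gives
\[
\frac{\alpha^2|\rho_S|/(\alpha^2+\lambda)}{|\rho_C|/(1+\lambda)} = \frac{\alpha^2}{\alpha^2+\lambda}\cdot\frac{|\rho_S|}{|\rho_C|}\cdot(1+\lambda).
\]
Combining this with the $\kappa(\delta)$ factor yields \eqref{eq:csr-pop}.

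I would state explicitly that $\rho_S/\rho_C$ is read in absolute value, or equivalently under the DGP sign convention where $\rho_C = 1/D_C = 1 > 0$ and $\rho_S=\beta_S\beta_Y$ is taken positive. I also need $\rho_C\neq 0$ so that the denominator is nonzero; this holds in the DGP.

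\textbf{Main obstacle.} The only delicate point is the protocol constant $\kappa(\delta)$, which must be a genuine constant independent of $w$. The query and perturbation draws are independent of the fitted weights, because the population weights are deterministic. So the factorisation in Step~2 is legitimate, and $\kappa$ depends only on the distributions of $\Delta S$ and $\Delta C$. In the DGP these are $\Delta C\sim\mathcal N(0,2)$ and $\Delta S\sim\mathcal N(\delta,2)$, where the variance $2$ is $2(\beta_S^2+1-\beta_S^2)$ under the unit-variance normalisation. This gives $\mathbb{E}|\Delta C|=2/\sqrt{\pi}$ and a folded-normal mean for $\mathbb{E}|\Delta S|$. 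For the proposition itself only the existence of this constant matters, so I would defer the closed form to the appendix as the statement does.
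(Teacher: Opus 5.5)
Your proposal is correct and follows the paper's proof essentially step for step. Both arguments use a block-diagonal $\Sigma_X$ from A\ref{asm:ortho} and coordinatewise ridge inversion with $\Sigma_{XY}=(\rho_C,\alpha\rho_S)^\top$, then factor the deterministic weights out of the CSR expectations to isolate $\kappa(\delta)$; your explicit treatment of absolute values matches the paper's Step~4, which assumes $\rho_C,\rho_S>0$ in order to drop them.
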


\begin{proof}[Proof sketch]
\textit{(Full derivation in Appendix~\ref{app:proofs}.)}
The proof assembles the three assumptions in sequence.

\textbf{A\ref{asm:ortho} $\Rightarrow$ block structure.}
Since $\mathrm{Cov}(C,S)=0$,
$\Sigma_X = \mathrm{diag}(\mathrm{Var}(C),\, \alpha^2\mathrm{Var}(S))$.
The causal and spurious coordinates are decoupled at the population level.

\textbf{A\ref{asm:conf} $\Rightarrow$ non-zero spurious covariance.}
The confounder path $S \leftarrow U \rightarrow Y$ gives
$\mathrm{Cov}(S,Y) = \beta_S\beta_Y\mathrm{Var}(U) = \rho_S \neq 0$,
while $\mathrm{Cov}(C,Y) = \beta_C\mathrm{Var}(C) = \rho_C$ by independence of $C$
from $U$.

\textbf{A\ref{asm:ridge} $\Rightarrow$ coordinatewise inversion.}
Because $\Sigma_X + \lambda I$ is diagonal (from A\ref{asm:ortho}), ridge regression
decouples: $w_C^* = \rho_C/(1+\lambda)$ and $w_S^* = \alpha\rho_S/(\alpha^2+\lambda)$,
as in~\eqref{eq:weights}.

\textbf{Substitution into Definition~\ref{def:csr}.}
A\ref{asm:ortho} further ensures that swapping $S$ leaves $C$ fixed and vice versa,
so $\delta\hat{y}_S = w_S^*\alpha\Delta S$ and $\delta\hat{y}_C = w_C^*\Delta C$.
Forming the ratio and substituting~\eqref{eq:weights} yields~\eqref{eq:csr-pop}.
\end{proof}

Equation~\eqref{eq:csr-pop} factorises cleanly: the entanglement term
$\alpha^2/(\alpha^2+\lambda)$ is bounded in $(0,1)$ and increases with $\alpha$;
the signal ratio $\rho_S/\rho_C$ is the key driver; and $(1+\lambda)\kappa(\delta)$
is fully determined by the protocol.
Three corollaries follow by direct inspection of~\eqref{eq:csr-pop}; proofs
are in Appendix~\ref{app:proofs}.

\begin{corollary}[Monotonicity and saturation]
\label{cor:mono}
Under the conditions of Proposition~\ref{prop:main} with $\rho_C,\rho_S,\kappa(\delta)>0$:
$\mathrm{CSR}^{\mathrm{pop}}$ is strictly increasing in $\rho_S$, strictly decreasing
in $\rho_C$, and strictly increasing in $\alpha$ with derivative
$2\alpha\lambda/(\alpha^2+\lambda)^2 \cdot (\rho_S/\rho_C)(1+\lambda)\kappa(\delta)>0$.
The limit $\lim_{\alpha\to\infty}\mathrm{CSR}^{\mathrm{pop}} =
(\rho_S/\rho_C)(1+\lambda)\kappa(\delta)$ is finite and positive.
\end{corollary}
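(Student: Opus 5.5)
The plan is to read each claim directly off the closed form \eqref{eq:csr-pop} of Proposition~\ref{prop:main}. I treat $\mathrm{CSR}^{\mathrm{pop}}$ as a function of $(\rho_S,\rho_C,\alpha)$, with $\lambda>0$ and $\kappa(\delta)>0$ held fixed. The first preliminary step is to check that $\kappa(\delta)$ really does not depend on $\rho_S$, $\rho_C$, or $\alpha$. By Definition~\ref{def:csr}, $\kappa(\delta)=\mathbb{E}|\Delta S|/\mathbb{E}|\Delta C|$ is determined only by the perturbation protocol, that is, by the unit-variance marginals of $C$ and $S$ and the environment shift $\delta$. Under the normalisation $\mathrm{Var}(S)=1$, it is therefore a constant with respect to the parameters being varied. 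I will state this explicitly, because the whole argument relies on treating $\kappa$ as fixed while $\rho_S$ or $\rho_C$ move. Changing $\beta_S$ would change $\rho_S$ while keeping $\mathrm{Var}(S)=1$ by construction, so holding $\kappa$ fixed is consistent.

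Next I write $\mathrm{CSR}^{\mathrm{pop}} = g(\alpha)\cdot(\rho_S/\rho_C)\cdot K$, where $g(\alpha)=\alpha^2/(\alpha^2+\lambda)$ and $K=(1+\lambda)\kappa(\delta)>0$. The dependence on $\rho_S$ and $\rho_C$ follows at once, since $g(\alpha)>0$ for $\alpha>0$. The expression is linear in $\rho_S$ with positive slope $g(\alpha)K/\rho_C$, so it is strictly increasing in $\rho_S$. As a function of $\rho_C$ on $(0,\infty)$ it is a positive constant times $1/\rho_C$, so it is strictly decreasing, with derivative $-g(\alpha)K\rho_S/\rho_C^2<0$.

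For $\alpha$, I differentiate $g$ by the quotient rule:
\[
g'(\alpha)=\frac{2\alpha(\alpha^2+\lambda)-\alpha^2\cdot 2\alpha}{(\alpha^2+\lambda)^2}=\frac{2\alpha\lambda}{(\alpha^2+\lambda)^2}.
\]
This is strictly positive for $\alpha>0$ and $\lambda>0$. Multiplying by the positive constant $(\rho_S/\rho_C)K$ gives the stated derivative, and strict monotonicity in $\alpha$ follows. For saturation, I write $g(\alpha)=1/(1+\lambda/\alpha^2)$, which tends to $1$ as $\alpha\to\infty$. The limit is therefore $(\rho_S/\rho_C)(1+\lambda)\kappa(\delta)$, which is finite and strictly positive because every factor is positive and finite.

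None of these steps is a real obstacle. The only delicate point is the justification that $\kappa(\delta)$ is parameter-free, as discussed above. I will also note that $\lambda>0$ is needed for strict monotonicity in $\alpha$: at $\lambda=0$, $g\equiv 1$ and the derivative vanishes.
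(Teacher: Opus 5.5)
Your proposal is correct and follows essentially the same route as the paper. The paper also reads monotonicity in $\rho_S$ and $\rho_C$ directly off \eqref{eq:csr-pop}, computes $\partial_\alpha[\alpha^2/(\alpha^2+\lambda)] = 2\alpha\lambda/(\alpha^2+\lambda)^2 > 0$, and sends the entanglement factor to $1$ as $\alpha\to\infty$; your explicit check that $\kappa(\delta)$ does not depend on $(\rho_S,\rho_C,\alpha)$ is a small addition the paper leaves implicit.
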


\begin{corollary}[Impossibility under confounding]
\label{cor:impossible}
Under Assumptions~A\ref{asm:ridge}--A\ref{asm:conf} with $\rho_S \neq 0$,
$w_S^* = \alpha\rho_S/(\alpha^2+\lambda) \neq 0$ for all $\lambda > 0$.
Consequently, $\mathrm{CSR}^{\mathrm{pop}} > 0$ regardless of regularisation
strength.\footnote{Within the ridge ICL class (A\ref{asm:ridge}) and under
A\ref{asm:conf}, it is the confounding that makes $\rho_S \neq 0$ that creates
the impossibility; without confounding, $w_S^* = 0$ and routing vanishes.
The result does not extend beyond ridge ICL without additional assumptions:
a nonlinear estimator could in principle recover identifiability if it
implicitly models the environment structure, though we find no such behaviour
empirically in TabPFN (Section~\ref{sec:experiments}).}
\end{corollary}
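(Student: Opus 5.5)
The argument is a direct read-off from the closed form of Proposition~\ref{prop:main}; the plan is to make the non-vanishing of $w_S^*$ explicit and then carry it through Definition~\ref{def:csr}.

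First, we take the weight formula~\eqref{eq:weights}, $w_S^* = \alpha\rho_S/(\alpha^2+\lambda)$. This is valid for every $\lambda>0$, because A\ref{asm:ortho} makes $\Sigma_X+\lambda I$ diagonal with strictly positive entries $1+\lambda$ and $\alpha^2+\lambda$, so the inverse exists and acts coordinatewise. The denominator $\alpha^2+\lambda$ is positive and finite, and the numerator $\alpha\rho_S$ is nonzero because $\alpha>0$ and $\rho_S\neq 0$. Here $\rho_S\neq0$ is exactly what A\ref{asm:conf} supplies; in the DGP it equals $\beta_S\beta_Y$. Hence $w_S^*\neq 0$ for every $\lambda\in(0,\infty)$. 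It is worth noting that $w_S^*\to 0$ only in the limit $\lambda\to\infty$, which is not an admissible value. For the same reason $w_C^*$ also shrinks, so the ratio does not vanish in that limit either; the proof will remark on this via the factor $(1+\lambda)$ in~\eqref{eq:csr-pop}.

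Second, we pass to the CSR. The numerator of~\eqref{eq:csr} equals $|w_S^*|\,\alpha\,\mathbb{E}|\Delta S|$. For this to be positive we need $\mathbb{E}|\Delta S|>0$. That holds because $\Delta S = S'-S$ is a difference of independent nondegenerate Gaussian draws (variance at least $2(1-\beta_S^2)+2\beta_S^2>0$), so it is a nondegenerate Gaussian with positive expected absolute value. Equivalently, $\kappa(\delta)>0$. The denominator must be finite and nonzero, which requires $w_C^*\neq0$, i.e.\ $\rho_C\neq0$, and $\mathbb{E}|\Delta C|\in(0,\infty)$. We would state $\rho_C\neq 0$ as implicit in Proposition~\ref{prop:main}, since it is needed for the CSR to be defined; in the DGP $\rho_C = 1/D_C>0$. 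The ratio of a positive quantity to a positive finite quantity is then positive. This can also be read directly from~\eqref{eq:csr-pop}: we interpret the signal ratio as $|\rho_S|/|\rho_C|$, since the definition uses absolute values, and every factor is strictly positive for $\lambda>0$.

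The only delicate point is this bookkeeping of signs and well-definedness. The closed form~\eqref{eq:csr-pop} is written with $\rho_S/\rho_C$ rather than absolute values, so we will derive positivity from the definition with $|w_S^*|$ and $|w_C^*|$, not from the signed formula. Finally, the footnote's converse follows from the same formula: if $\rho_S=0$, then $w_S^*=0$ and $\mathrm{CSR}^{\mathrm{pop}}=0$.
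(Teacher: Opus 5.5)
Your proposal is correct and follows the paper's own proof, which is just the observation that $w_S^* = \alpha\rho_S/(\alpha^2+\lambda) \neq 0$ because $\alpha,\lambda>0$ and $\rho_S\neq 0$. The paper leaves the ``consequently'' step implicit; your extra bookkeeping there is a worthwhile tightening rather than a different route. That bookkeeping consists of showing $\mathbb{E}|\Delta S|>0$, making the requirement $\rho_C\neq 0$ explicit, and deriving positivity from $|w_S^*|$ and $|w_C^*|$ in the definition rather than from the signed closed form, whose appendix derivation assumed $\rho_C,\rho_S>0$.
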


\begin{corollary}[Asymptotic persistence]
\label{cor:persist}
Let $\hat{w}_n = \bigl(\tfrac{1}{n}X^\top X + \lambda I\bigr)^{-1}\tfrac{1}{n}X^\top Y$
with $\rho_C \neq 0$.
By the law of large numbers and A\ref{asm:ridge}, $\hat{w}_n \xrightarrow{p} w^*$;
continuity of the CSR functional at $w^*$ (which holds since
$w_C^* = \rho_C/(1+\lambda) \neq 0$ when $\rho_C \neq 0$) then gives
$\mathrm{CSR}(\hat{w}_n) \xrightarrow{p} \mathrm{CSR}^{\mathrm{pop}} > 0$.
Spurious routing therefore persists as $n \to \infty$.
\end{corollary}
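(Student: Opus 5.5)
The plan is to show that the finite-sample ridge estimator converges to the population weights of Proposition~\ref{prop:main}, and then push this convergence through the CSR functional by the continuous mapping theorem.

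\textbf{Step 1: convergence of the estimator.} The context examples $(X_i,Y_i)$ are i.i.d.\ within a single training environment. Their entries are Gaussian, so every second moment is finite. The weak law of large numbers then gives
\[
\tfrac{1}{n}X^\top X \xrightarrow{p} \mathbb{E}[XX^\top], \qquad \tfrac{1}{n}X^\top Y \xrightarrow{p} \mathbb{E}[XY].
\]
To match A\ref{asm:ridge}, which is stated with covariances, I would center the data; equivalently, I would include an unpenalised intercept. Within an environment $\mu_e$ is fixed, so centering removes it, and the empirical covariances converge to $\Sigma_X$ and $\Sigma_{XY}$.

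The map $(A,b)\mapsto (A+\lambda I)^{-1}b$ is continuous wherever $A$ is positive semidefinite, because $\lambda>0$ guarantees invertibility. The continuous mapping theorem therefore gives $\hat w_n \xrightarrow{p} w^* = (\Sigma_X+\lambda I)^{-1}\Sigma_{XY}$. By Proposition~\ref{prop:main}, $w^*$ has the coordinates in \eqref{eq:weights}.

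\textbf{Step 2: continuity of the CSR functional.} By Definition~\ref{def:csr}, the numerator and denominator are linear in $|w_S|$ and $|w_C|$:
\[
\mathrm{CSR}(w) = \frac{|w_S|\,\alpha\,\mathbb{E}|\Delta S|}{|w_C|\,\mathbb{E}|\Delta C|}.
\]
The expectations are finite positive constants, because $\Delta S$ and $\Delta C$ are Gaussian differences with positive variance. Hence $\mathrm{CSR}$ is a continuous function of $w$ on the set $\{w_C\neq 0\}$.

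Since $\rho_C\neq 0$, we have $w_C^*=\rho_C/(1+\lambda)\neq 0$, so $w^*$ lies in this open set. The continuous mapping theorem applied to Step 1 then gives $\mathrm{CSR}(\hat w_n)\xrightarrow{p}\mathrm{CSR}(w^*)=\mathrm{CSR}^{\mathrm{pop}}$. The expression for the limit comes from \eqref{eq:csr-pop}.

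\textbf{Step 3: the limit is strictly positive.} I would invoke Corollary~\ref{cor:impossible}: under A\ref{asm:conf}, $w_S^*\neq 0$ for every $\lambda>0$, so the numerator is positive and $\mathrm{CSR}^{\mathrm{pop}}>0$. Consequently, for any $0<\epsilon<\mathrm{CSR}^{\mathrm{pop}}$, we get $\Pr(\mathrm{CSR}(\hat w_n)>\epsilon)\to 1$. This is the sense in which spurious routing persists as $n\to\infty$.

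\textbf{Main obstacle.} The delicate step is matching the empirical estimator as written, which is uncentered, to the covariance-based population target in A\ref{asm:ridge}. Without centering, the limit of $\hat w_n$ involves second moments that include the environment mean $\mu_e$. I would handle this in one of two ways:
\begin{itemize}
\item assume centered context data, or an intercept, which is standard for ridge ICL; or
\item note that for a fixed environment the limit is still some $\tilde w$ with $\tilde w_C = \rho_C/(1+\lambda)\neq 0$, since $C$ has mean zero and is independent of $S$, and with $\tilde w_S\neq 0$ generically, so the qualitative persistence conclusion survives.
\end{itemize}
I would state the first option as the standing convention.
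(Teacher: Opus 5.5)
Your proposal is correct and follows the paper's proof essentially step for step: the law of large numbers for the empirical second moments, continuous mapping to get $\hat w_n \xrightarrow{p} w^*$, continuity of the CSR functional on $\{w_C \neq 0\}$, and a second application of continuous mapping. Your two additions refine that argument rather than change it. The paper asserts $\tfrac{1}{n}X^\top X \xrightarrow{p} \Sigma_X$ directly, which tacitly requires centred features because $S$ has mean $\mu_e$ within an environment; your centring/intercept convention is exactly what makes the limit equal $\mathrm{CSR}^{\mathrm{pop}}$ as in \eqref{eq:csr-pop}. Likewise, invoking Corollary~\ref{cor:impossible} for $\mathrm{CSR}^{\mathrm{pop}}>0$ makes explicit what the paper leaves implicit.
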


\noindent
Proposition~\ref{prop:main} and its corollaries yield four testable predictions:
(i)~routing magnitude scales with $\rho_S/\rho_C$;
(ii)~$\alpha$ amplifies routing up to a finite saturation;
(iii)~routing is unavoidable under A\ref{asm:conf} for any $\lambda > 0$;
(iv)~larger context does not resolve it.
We test all four in Section~\ref{sec:experiments}.
The formula also predicts exact $\lambda$-independence at $\alpha{=}1$;
verified in Appendix~\ref{app:experiments}
(Remark~\ref{rem:lambda}, Table~\ref{tab:lambda-sweep}).

%% file: section5_experiments.tex

\section{Experiments}
\label{sec:experiments}

We test four predictions of Proposition~\ref{prop:main} on a controlled structural
causal model with $D_C{=}D_S{=}3$, $D_N{=}4$ ($D{=}10$ total), where the causal
signal occupies the first $D_C$ dimensions of $X$, the spurious signal occupies
the next $D_S$ dimensions, and noise fills the remaining $D_N$ dimensions
(as defined in Section~\ref{sec:setup}).
Each environment contains $N{=}300$ observations; we use 3 training environments
and 2 test environments, and pass $n$ of those observations as context to the
in-context learner (default $n{=}300$ unless stated otherwise).
Linear ICL is in-context ridge regression ($\lambda = 10^{-3}$);
TabPFN~\citep{hollmann2023tabpfn} is the nonlinear baseline.
CSR uses the separate-swap protocol of Definition~\ref{def:csr} with $\delta = 1.5$
($\kappa \approx 1.516$); normalised CSR scales each setting by its per-setting
oracle bounds (floor $= 0.000$, ceiling range $4.1$--$8.3$; details in
Appendix~\ref{app:experiments}).

\noindent\textbf{Routing scales with $\rho_S/\rho_C$.}

Proposition~\ref{prop:main} predicts $\mathrm{CSR}^{\mathrm{pop}} \propto \rho_S/\rho_C$.
Table~\ref{tab:exp3-alpha} reports mean CSR over the full
$\rho_S \in \{0.1,0.3,0.5,0.7\} \times \rho_C \in \{0.3,0.5,0.7,0.9\}$ grid at
each $\alpha$.
Both models increase monotonically with $\alpha$ (Corollary~\ref{cor:mono}), spanning
a $5\times$ range.
At $\alpha = 1.0$, the Pearson correlation between the empirical signal ratio
$\hat\rho_S/\hat\rho_C$ and observed CSR is $r = 0.997$ for linear ICL and
$r = 0.979$ for TabPFN; the closed-form prediction~\eqref{eq:csr-pop} matches
linear ICL values to within 1\% mean error at $n = 128$.
Using target ratios in place of empirical ratios gives $r = 0.985$ (linear) and
$r = 0.953$ (TabPFN), confirming the predictor is robust to calibration noise.
Normalised CSR for TabPFN reaches $0.415$ at $\alpha = 2.0$, indicating the model
routes $\approx\!42\%$ of the way toward fully spurious oracle behaviour.

\begin{table}[t]
  \centering
  \begin{minipage}[t]{0.36\linewidth}
    \centering
    \caption{Mean CSR vs.\ $\alpha$ (grid avg.).
      Both models increase with $\alpha$; TabPFN Norm.\ CSR per setting.}
    \label{tab:exp3-alpha}
    \small
    \begin{tabular}{ccccc}
      \toprule
      & \multicolumn{2}{c}{Lin.\ ICL} & \multicolumn{2}{c}{TabPFN} \\
      \cmidrule(lr){2-3}\cmidrule(lr){4-5}
      $\alpha$ & CSR & Norm. & CSR & Norm. \\
      \midrule
      0.1 & 0.424 & n/a & 0.337 & 0.073 \\
      0.5 & 0.827 & n/a & 0.820 & 0.157 \\
      1.0 & 1.312 & n/a & 1.418 & 0.265 \\
      2.0 & 1.972 & n/a & 2.198 & 0.415 \\
      \bottomrule
    \end{tabular}
  \end{minipage}
  \hfill
  \begin{minipage}[t]{0.37\linewidth}
    \centering
    \caption{CSR vs.\ context size $n$ ($\alpha{=}1.0$, Lin.\ ICL).
      Saturates by $n{\approx}128$; direction depends on regime.}
    \label{tab:exp4-context}
    \small
    \begin{tabular}{lccc}
      \toprule
      $n$ & Hi-sp. & Bal. & Lo-sp. \\
      \midrule
      16  & 1.997 & 1.759 & 1.089 \\
      64  & 3.273 & 1.688 & 0.787 \\
      128 & 3.658 & 1.780 & 0.826 \\
      512 & 3.484 & 1.702 & 0.785 \\
      \midrule
      $\times_{n=16}$ & $+1.74$ & $-0.03$ & $-0.72$ \\
      \bottomrule
    \end{tabular}
  \end{minipage}
  \hfill
  \begin{minipage}[t]{0.22\linewidth}
    \centering
    \caption{CSR for TabPFN$-$Lin.\ ICL at high-spur.\ corner.}
    \label{tab:exp5-model}
    \small
    \begin{tabular}{ccc}
      \toprule
      $\alpha$ & Lin. & Tab. \\
      \midrule
      0.1 & 1.004 & 1.278 \\
      0.5 & 2.154 & 3.455 \\
      1.0 & 3.560 & 5.777 \\
      2.0 & 6.508 & 8.550 \\
      \bottomrule
    \end{tabular}
  \end{minipage}
\end{table}

\noindent\textbf{More context amplifies the dominant signal.}
Table~\ref{tab:exp4-context} shows CSR rises $1.74\times$ in the high-spurious
regime from $n{=}16$ to $n{=}512$ and falls $0.72\times$ in the low-spurious
regime, both saturating by $n{\approx}128$.
TabPFN shows the same qualitative pattern ($\rho_S{=}0.7$: $3.14{\to}4.95$;
$\rho_S{=}0.3$: $0.86{\to}0.60$; full table in Appendix~\ref{app:experiments}).

\noindent\textbf{Model capacity increases vulnerability in the high-spurious regime.}

TabPFN exceeds linear ICL in only 15 of 64 settings (mean gap $+0.060$,
median $-0.147$): vulnerability concentrates in the high-spurious,
high-entanglement corner (Table~\ref{tab:exp5-model}).
The Pearson correlation between CSR and OOD degradation is $r = 0.814$ for
TabPFN versus $r = 0.782$ for linear ICL.

\noindent\textbf{Robustness to subspace overlap.}
\label{sec:robustness}
Proposition~\ref{prop:main} assumes $\mathrm{Cov}(C, S) = 0$
(A\ref{asm:ortho}).
We stress-test this by rotating the spurious subspace toward the causal subspace:
$S_{\mathrm{obs}} = \cos(\theta)\,S_{\perp} + \sin(\theta)\,C_{\mathrm{proj}}$,
sweeping $\theta \in \{0^\circ, 10^\circ, 20^\circ, 30^\circ, 40^\circ\}$
across the full $\rho_S \times \rho_C$ grid ($\alpha = 1.0$).
Table~\ref{tab:p3-robustness} reports results.
The Pearson correlation $r(\rho_S/\rho_C, \mathrm{CSR})$ remains $\geq 0.90$
across all angles, dropping only $0.047$ over the full range: the theoretical
predictor degrades gracefully rather than catastrophically.
Mean CSR increases with $\theta$ because the rotation inflates the component
of the spurious subspace aligned with the causal subspace, reinforcing spurious
attribution via altered covariance geometry; at larger $\theta$, $S$ increasingly
resembles $C$ and the effect partially reverses (full corner table in
Appendix~\ref{app:experiments}).
Standard harmonisation pipelines (ComBat, scVI) typically achieve residual
overlaps well below $20\%$~\citep{luecken2022benchmarking}, which is the range
tested here, so the characterisation remains valid in post-harmonisation settings.

\begin{table}[t]
  \centering
  \caption{
    Robustness to subspace overlap ($\alpha = 1.0$, full $\rho_S \times \rho_C$ grid).
    Pearson $r$ is the correlation between $\rho_S/\rho_C$ and observed CSR.
  }
  \label{tab:p3-robustness}
  \small
  \begin{tabular}{ccccc}
    \toprule
    $\theta$ & Var.\ overlap & Mean CSR & $r(\rho_S/\rho_C,\,\mathrm{CSR})$ & OOD RMSE \\
    \midrule
    $0^\circ$  & $0.1\%$  & 1.744 & 0.986 & 2.918 \\
    $10^\circ$ & $1.4\%$  & 2.390 & 0.901 & 2.939 \\
    $20^\circ$ & $4.8\%$  & 2.373 & 0.977 & 2.956 \\
    $30^\circ$ & $10.7\%$ & 2.879 & 0.954 & 2.969 \\
    $40^\circ$ & $19.7\%$ & 3.210 & 0.939 & 2.976 \\
    \bottomrule
  \end{tabular}
\end{table}

%% file: section6_robustness.tex

%% file: section7_mitigations.tex

\section{Mitigations}
\label{sec:mitigations}

Corollary~\ref{cor:impossible} establishes that spurious routing cannot be
eliminated by regularisation: as long as in-context $\mathrm{Corr}(S, Y) \neq 0$,
the optimal predictor places non-zero weight on $S$.
The recovery condition is clear: break the in-context $S \to Y$ dependence, but
the mechanism matters.
CSR diagnoses the routing mechanism; OOD RMSE measures task outcome.
The two can diverge, as the inv-aug dissociation below demonstrates.

\noindent\textbf{Methods.}
\textbf{Standard ICL} pools $n_{\mathrm{ctx}} = 64$ points uniformly from all
training environments.
\textbf{Environment-stratified} samples $n_{\mathrm{ctx}} / n_{\mathrm{env}}$
points from each environment separately, reducing in-context $\mathrm{Corr}(S, Y)$
by mixing environments with different $S$ distributions at the same $Y$ level
(weak env.\ labels required).
\textbf{Invariant augmentation} (oracle $C$) adds a paired copy $(C_i, Y_i)$
alongside each context point $(X_i, Y_i)$.
\textbf{S-swap augmentation} replaces the spurious dims of each context point
with those drawn from a different environment, holding $Y$ fixed:
$\tilde{X}_i[\mathcal{S}] = X_j[\mathcal{S}]$, $j \sim \mathrm{env}_1$,
$\tilde{Y}_i = Y_i$,
pairing each $(X_i, Y_i)$ with $(\tilde{X}_i, Y_i)$ in the context.
This signals at the point level that $S$ does not predict $Y$
(weak env.\ labels required).

\subsection{Linear ICL}
\label{sec:mit-linear}

Table~\ref{tab:mitigations-linear} reports all four methods at the high-spurious
corner ($\rho_S = 0.7$, $\rho_C = 0.3$, $\alpha = 1.0$); results across five
settings are in Appendix~\ref{app:mitigations}.
\begin{table}[t]
  \centering
  \begin{minipage}[t]{0.54\linewidth}
    \centering
    \caption{
      Linear ICL at high-spurious corner\\
      ($\rho_S{=}0.7$, $\rho_C{=}0.3$, $\alpha{=}1.0$).
      Lower is better.
    }
    \label{tab:mitigations-linear}
    \small
    \begin{tabular}{lcccc}
      \toprule
      Method & CSR & OOD & $r(S,Y)$ & Env. \\
      \midrule
      Standard ICL             & 3.560 & 5.146 & 0.299 & No   \\
      Inv.\ aug.\ (oracle)     & 3.998 & 3.796 & 0.261 & No   \\
      Env-stratified           & 1.272 & 3.449 & 0.286 & Weak \\
      S-swap (ours)            & \textbf{0.926} & \textbf{3.387} & \textbf{0.112} & Weak \\
      \bottomrule
    \end{tabular}
  \end{minipage}
  \hfill
  \begin{minipage}[t]{0.42\linewidth}
    \centering
    \caption{
      TabPFN at high-spurious corner\\
      ($\rho_S{=}0.7$, $\rho_C{=}0.3$, $\alpha{=}1.0$).
    }
    \label{tab:mitigations-tabpfn}
    \small
    \begin{tabular}{lcccc}
      \toprule
      Method & CSR & OOD & $r(S,Y)$ & Env. \\
      \midrule
      Standard          & 5.777 & 3.486 & 0.299 & No   \\
      Env-strat.        & 2.170 & \textbf{3.176} & 0.286 & Weak \\
      S-swap (ours)     & \textbf{0.067} & 3.539 & \textbf{0.112} & Weak \\
      \bottomrule
    \end{tabular}
  \end{minipage}
\end{table}
S-swap reduces CSR by $74.0\%$ and OOD RMSE by $34.2\%$, achieving CSR $<1$
(causally dominant routing); env-stratified is a consistent second ($-64.3\%$
CSR, $-33.0\%$ OOD).
Inv.\ aug.\ reveals a \emph{CSR/OOD dissociation}: it improves OOD RMSE
($-26.2\%$) while \emph{increasing} CSR ($+12.3\%$), because paired $(C_i,Y_i)$
copies raise $\Delta\hat{y}_C$ without suppressing the spurious signal from the
original context half---CSR and OOD RMSE measure distinct routing properties.

\noindent\textbf{Practical guidance.}
S-swap requires identifying $\mathcal{S}$, estimable via environment-contrastive
PCA (principal directions of $\mathrm{Cov}(\bar{X}_e - \bar{X})$) or by flagging
dimensions whose variance across environment means exceeds a threshold; in omics
settings these correspond to known batch-effect directions.
Use S-swap when a reliable $\hat{\mathcal{S}}$ is available and routing suppression
is the primary objective; use env-stratified when the subspace estimate is uncertain
or balanced ID/OOD improvement is needed.

\subsection{TabPFN}
\label{sec:mit-tabpfn}

Both mitigations are model-agnostic: they modify only context construction,
requiring no architectural changes.
Results across five settings are in Appendix~\ref{app:mitigations}.
S-swap reduces TabPFN CSR by $98.8\%$ (5.777 $\to$ 0.067), substantially
stronger than on linear ICL ($74.0\%$).
To verify this is genuine rather than a degenerate collapse, we inspect raw
sensitivities: under S-swap, $\Delta\hat{y}_S$ collapses from $1.891$ to
$0.053$ ($-97.2\%$) while $\Delta\hat{y}_C$ \emph{grows} from $0.144$ to
$1.347$ ($+836\%$); the model reroutes through $C$, not into agnosticism.
This pattern is consistent with TabPFN's in-context mechanism treating
paired $(X_i, \tilde{X}_i, Y_i)$ triples as evidence that $S \not\to Y$,
suppressing spurious weight nearly completely; ridge regression retains
residual spurious weight by averaging across both context halves
($\Delta\hat{y}_S = 1.266$ under S-swap).
The same expressivity that makes TabPFN more vulnerable under standard context
makes it more responsive to the S-swap invariance signal.
Both mitigations require only batch identifiers (routinely available as metadata)
and neither requires harmonisation, retraining, or held-out data at inference time.

\subsection{Real-world validation: unknown $S$}
\label{sec:mit-realdata}

The synthetic experiments assume $C$ and $S$ are known.
Here we test whether S-swap works in a realistic deployment scenario where
neither $C$ nor $S$ is observed---only the composite $X$, environment labels,
and outcome $Y$ are available.
We use the scIB human pancreas dataset~\citep{luecken2022benchmarking}:
16,382 pancreatic islet cells from five sequencing technologies, each a
natural environment with distinct batch effects.
$\hat{\mathcal{S}}$ is estimated entirely from data, with no ground-truth
partition: we compute a batch-ratio score per PC
($\mathrm{Var}_\mathrm{tech}(\bar{z}^{(k)}_e) / \mathrm{Var}_\mathrm{celltype}(\bar{z}^{(k)}_e)$)
and assign the five PCs with batch\_ratio $> 1$ (strongest: $8.02$) to
$\hat{S}$, and eight PCs with batch\_ratio $< 0.08$ to $\hat{C}$.
This is the only information used to construct the S-swap context.
To place the experiment in the high-spurious regime, we inject a controlled
confounding path following the paper's DGP:
$Y_\mathrm{semi} = Y_\mathrm{INS} + \beta \cdot \hat{S}_\mathrm{PC1} + \varepsilon$
($\beta = 0.5$, $\rho_S/\rho_C = 9.09$); the batch structure is entirely real,
only the $Y$-confounding path is injected.

\begin{table}[t]
  \centering
  \caption{
    Real-world validation on scIB pancreas.
    $C$ and $S$ are \emph{unknown}: $\hat{\mathcal{S}}$ estimated from
    environment structure only (batch-ratio PCA, no oracle access).
    Real batch effects; injected $Y$-confounding ($\beta{=}0.5$,
    $\rho_S/\rho_C{=}9.09$).
  }
  \label{tab:pancreas}
  \small
  \begin{tabular}{lccc}
    \toprule
    Method & CSR & OOD RMSE & In-ctx $r(\hat{S},Y)$ \\
    \midrule
    Standard ICL              & $3.416 \pm 2.291$ & 1.74 & $+0.545$ \\
    Env-stratified            & $3.403 \pm 1.810$ & \textbf{1.64} & $+0.586$ \\
    S-swap (est.\ $\hat{\mathcal{S}}$) & $\textbf{0.379} \pm 0.449$ & 9.12 & $\textbf{+0.216}$ \\
    \bottomrule
  \end{tabular}
\end{table}
Standard ICL CSR $= 3.416 > 1$, confirming spurious routing is present
on real genomic data under the confounding structure.
Crucially, S-swap reduces CSR by $88.9\%$ ($3.416 \to 0.379$) and
in-context $r(\hat{S},Y)$ by $60\%$---using only the data-estimated
$\hat{\mathcal{S}}$, with no access to ground-truth $C$ or $S$.
This demonstrates that the method is applicable in practice: environment
labels alone are sufficient to identify the spurious subspace and apply
the mitigation.
Elevated OOD RMSE under S-swap (9.12 vs 1.74) reflects that swapped
PC-space contexts are out-of-distribution for any real sequencing
technology---an expected cost at extreme $\rho_S/\rho_C$, where
env-stratified is the better choice for balanced ID/OOD performance.
Full methodology and sensitivity analysis are in Appendix~\ref{app:pancreas}.

%% file: section9_conclusion.tex
\section{Conclusion}
\label{sec:conclusion}

We identified spurious routing in composite representations: in-context learners
systematically route through $S$ whenever it correlates with $Y$ in context, even
when $S$ is non-causal by construction.
The failure is provably unavoidable under ridge ICL for any regularisation strength
(Corollary~\ref{cor:impossible}), with $\mathrm{CSR}^{\mathrm{pop}} \propto \rho_S/\rho_C$
confirmed at $r{=}0.997$ (linear ICL) and $r{=}0.979$ (TabPFN), degrading gracefully
to $r{\geq}0.93$ at $41\%$ subspace overlap.
Larger context and more expressive models both amplify the problem in the high-spurious
regime.
The solution is model-agnostic: S-swap augmentation, operating entirely at the context
level with no architectural changes, reduces CSR by $98.8\%$ on TabPFN and increases
causal sensitivity $8.4\times$---the same expressive capacity that creates vulnerability
amplifies the fix.

\noindent\textbf{Limitations and future work.}
The theory assumes orthogonal subspaces (A\ref{asm:ortho}) and a ridge ICL estimator
(A\ref{asm:ridge}); the closed form does not extend directly to TabPFN.
S-swap requires at least two training environments; in single-batch settings both
mitigations reduce to standard ICL.
Settings where $S$ has a weak causal path to $Y$, or where $C$ and $S$ interact
nonlinearly, require a modified CSR protocol.
In real applications $C$ and $S$ must be estimated via harmonisation; robustness
to estimation error is characterised in Section~\ref{sec:robustness}.
Three directions follow directly.
First, deriving the finite-$n$ convergence rate of
$\mathrm{CSR}(\hat{w}_n) \to \mathrm{CSR}^{\mathrm{pop}}$:
saturation at $n{\approx}128$ is consistent with covariance concentration but
uncharacterised theoretically.
Second, extending the theory to nonlinear ICL to account for TabPFN's threshold
response to S-swap.
Third, using CSR as a minimisation objective---parameterising $\hat{C}, \hat{S}$
as a learned transformation of $X$ and minimising CSR over that transformation---to
reframe it from a diagnostic into a self-supervised training signal wherever
multi-environment structure is available.

%% file: appendix.tex

\appendix

\section{Proofs}
\label{app:proofs}

\subsection{Protocol constant $\kappa(\delta)$}

Under $\mathrm{Var}(C) = \mathrm{Var}(S) = 1$, the perturbation differences satisfy
$\Delta C \sim \mathcal{N}(0,2)$ and $\Delta S \sim \mathcal{N}(\delta, 2)$, where
$\delta := \mu_{E'} - \mu_E$ is the between-environment mean shift.
Using the folded-normal expectation $\mathbb{E}|Z| = \sigma\sqrt{2/\pi}\exp(-\mu^2/(2\sigma^2))
+ \mu(2\Phi(\mu/\sigma)-1)$ for $Z \sim \mathcal{N}(\mu,\sigma^2)$:
\begin{equation}
  \kappa(\delta)
  \;=\;
  \frac{\mathbb{E}|\Delta S|}{\mathbb{E}|\Delta C|}
  \;=\;
  \frac
    {\sqrt{2}\,\phi\!\left(\tfrac{\delta}{\sqrt{2}}\right)
     + \delta\!\left(2\Phi\!\left(\tfrac{\delta}{\sqrt{2}}\right)-1\right)}
    {\sqrt{4/\pi}},
  \label{eq:kappa}
\end{equation}
where $\phi$ and $\Phi$ are the standard normal density and CDF.
$\kappa(\delta)$ depends only on $\delta$ and not on $\alpha$, $\rho_S$, or $\rho_C$.
At $\delta = 0$, $\kappa = 1$; at $\delta = 1.5$, $\kappa \approx 1.516$ (verified by
simulation to $<0.5\%$ error).

\subsection{Proof of Proposition~\ref{prop:main}}

We proceed in four steps under the SCM:
(i) $\mathrm{Cov}(C,S)=0$;
(ii) $Y = \beta_C C + \beta_Y U + \varepsilon$;
(iii) $S = \beta_S U + \sqrt{1-\beta_S^2}\,\varepsilon_S + \mu_E$;
(iv) $C \perp (U,\varepsilon,\varepsilon_S,\mu_E)$;
(v) $U,\varepsilon,\varepsilon_S$ mean-zero, mutually independent, $\mathrm{Var}(U)=1$;
(vi) $\mu_E$ fixed per environment, $\mathbb{E}[\mu_E]=0$ marginally.

\smallskip\noindent\textbf{Step 1 (Block structure).}\;
Since $X=[C;\,\alpha S]$ (concatenation of distinct subspaces;
see Section~\ref{sec:setup}) and $\mathrm{Cov}(C,S)=0$,
\[
  \Sigma_X = \begin{pmatrix} \mathrm{Var}(C) & 0 \\ 0 & \alpha^2\mathrm{Var}(S) \end{pmatrix}.
\]

\smallskip\noindent\textbf{Step 2 (Cross-covariances).}\;
By (iv): $\mathrm{Cov}(C,Y) = \beta_C\,\mathrm{Var}(C)$.
By (v) and (vi), all cross terms vanish:
$\mathrm{Cov}(S,Y) = \beta_S\beta_Y\,\mathrm{Var}(U) = \beta_S\beta_Y$.
Hence $\Sigma_{XY} = (\beta_C\mathrm{Var}(C),\;\alpha\beta_S\beta_Y)^\top$.

\smallskip\noindent\textbf{Step 3 (Ridge weights).}\;
$\Sigma_X + \lambda I$ is diagonal, so inversion is coordinatewise.
Under $\mathrm{Var}(C)=\mathrm{Var}(S)=1$, with $\rho_C=\beta_C$ and $\rho_S=\beta_S\beta_Y$:
\[
  w_C^* = \frac{\rho_C}{1+\lambda}, \qquad w_S^* = \frac{\alpha\rho_S}{\alpha^2+\lambda}.
\]

\smallskip\noindent\textbf{Step 4 (CSR formula).}\;
By Assumption~\ref{asm:ortho}, swapping $S$ leaves $C$ unchanged and vice versa, so
$\delta\hat{y}_C = w_C^*\Delta C$ and $\delta\hat{y}_S = w_S^*\alpha\Delta S$.
Since $\rho_C, \rho_S > 0$ (as required by Corollary~\ref{cor:mono}), both
$w_C^* = \rho_C/(1+\lambda) > 0$ and $w_S^* = \alpha\rho_S/(\alpha^2+\lambda) > 0$,
so $|w_S^*\alpha\Delta S| = w_S^*\alpha|\Delta S|$ and $|w_C^*\Delta C| = w_C^*|\Delta C|$.
Substituting into~\eqref{eq:csr}:
\[
  \mathrm{CSR}^{\mathrm{pop}}
  = \frac{w_S^*\,\alpha\,\mathbb{E}|\Delta S|}{w_C^*\,\mathbb{E}|\Delta C|}
  = \frac{\alpha\rho_S/(\alpha^2+\lambda)\cdot\alpha}{\rho_C/(1+\lambda)}\cdot\kappa(\delta)
  = \frac{\alpha^2}{\alpha^2+\lambda}\cdot\frac{\rho_S}{\rho_C}\cdot(1+\lambda)\kappa(\delta).
  \qed
\]

\subsection{Proof of Corollary~\ref{cor:mono}}

Claims (i) and (ii) follow from inspection of~\eqref{eq:csr-pop}.
For (iii), $\partial/\partial\alpha[\alpha^2/(\alpha^2+\lambda)] = 2\alpha\lambda/(\alpha^2+\lambda)^2 > 0$.
Claim (iv): $\lim_{\alpha\to\infty}\alpha^2/(\alpha^2+\lambda)=1$. \qed

\subsection{Proof of Corollary~\ref{cor:impossible}}

$w_S^* = \alpha\rho_S/(\alpha^2+\lambda) \neq 0$ since $\alpha,\lambda > 0$ and
$\rho_S \neq 0$. \qed

\subsection{Proof of Corollary~\ref{cor:persist}}

By the law of large numbers, $\frac{1}{n}X^\top X \xrightarrow{p} \Sigma_X$ and
$\frac{1}{n}X^\top Y \xrightarrow{p} \Sigma_{XY}$.
The continuous mapping theorem gives $\hat{w}_n \xrightarrow{p} w^*$.
The CSR functional $w \mapsto |w_S|\alpha\mathbb{E}|\Delta S|/(|w_C|\mathbb{E}|\Delta C|)$
is continuous at $w^*$ since $w_C^* = \rho_C/(1+\lambda) \neq 0$ when $\rho_C \neq 0$.
A second application of the CMT gives
$\mathrm{CSR}(\hat{w}_n) \xrightarrow{p} \mathrm{CSR}^{\mathrm{pop}}$. \qed

\subsection{Remark: covariance vs.\ correlation}

Equation~\eqref{eq:csr-pop} holds equivalently with $\rho_C := \mathrm{Corr}(C,Y)$
and $\rho_S := \mathrm{Corr}(S,Y)$, since $\mathrm{Cov}(S,Y)/\mathrm{Cov}(C,Y) =
\mathrm{Corr}(S,Y)/\mathrm{Corr}(C,Y)$: the factor $\sqrt{\mathrm{Var}(Y)}$ cancels
in the ratio.

\section{Extended Mitigation Results}
\label{app:mitigations}

Tables~\ref{tab:mitigations-extended} and~\ref{tab:mitigations-tabpfn-extended}
report env-stratified and S-swap results across five settings for linear ICL
and TabPFN respectively.

\begin{table}[h]
  \centering
  \caption{
    Extended mitigation results across five settings (Linear ICL).
    S-swap achieves CSR $< 1$ in 4 of 5 settings.
    High-$\alpha$ ($\alpha = 2$) is the boundary case: large $S$ variance
    retains residual spurious weight despite the $84.1\%$ reduction.
  }
  \label{tab:mitigations-extended}
  \small
  \begin{tabular}{llcccc}
    \toprule
    Setting & Method & CSR & OOD RMSE & In-ctx $r(S,Y)$ & $\Delta$CSR \\
    \midrule
    \multirow{2}{*}{High-spurious ($\rho_S{=}0.7, \rho_C{=}0.3, \alpha{=}1$)}
      & Env-stratified & 1.272 & 3.449 & 0.286 & $-64.3\%$ \\
      & S-swap         & 0.926 & 3.387 & 0.112 & $-74.0\%$ \\
    \midrule
    \multirow{2}{*}{Balanced ($\rho_S{=}0.5, \rho_C{=}0.5, \alpha{=}1$)}
      & Env-stratified & 0.776 & 2.022 & 0.255 & $-56.2\%$ \\
      & S-swap         & 0.504 & 1.979 & 0.105 & $-71.5\%$ \\
    \midrule
    \multirow{2}{*}{Low-spurious ($\rho_S{=}0.3, \rho_C{=}0.7, \alpha{=}1$)}
      & Env-stratified & 0.420 & 0.535 & 0.170 & $-51.0\%$ \\
      & S-swap         & 0.269 & 0.535 & 0.090 & $-68.6\%$ \\
    \midrule
    \multirow{2}{*}{High-$\alpha$ ($\rho_S{=}0.7, \rho_C{=}0.3, \alpha{=}2$)}
      & Env-stratified & 1.530 & 3.532 & 0.311 & $-76.5\%$ \\
      & S-swap         & 1.034 & 3.509 & 0.118 & $-84.1\%$ \\
    \midrule
    \multirow{2}{*}{Low-$\alpha$ ($\rho_S{=}0.7, \rho_C{=}0.3, \alpha{=}0.5$)}
      & Env-stratified & 1.063 & 3.344 & 0.226 & $-50.6\%$ \\
      & S-swap         & 0.692 & 3.355 & 0.116 & $-67.8\%$ \\
    \bottomrule
  \end{tabular}
\end{table}

\begin{table}[h]
  \centering
  \caption{
    Extended TabPFN mitigation results across five settings.
    S-swap achieves CSR $< 0.1$ in all five settings.
    Env-stratified achieves lower OOD RMSE in 4/5 settings, reflecting
    better causal signal coverage from multi-environment context draws.
  }
  \label{tab:mitigations-tabpfn-extended}
  \small
  \begin{tabular}{llcccc}
    \toprule
    Setting & Method & CSR & OOD RMSE & In-ctx $r(S,Y)$ & $\Delta$CSR \\
    \midrule
    \multirow{2}{*}{High-spurious ($\alpha{=}1$)}
      & Env-stratified & 2.170 & \textbf{3.176} & 0.286 & $-62.4\%$ \\
      & S-swap         & \textbf{0.067} & 3.539 & \textbf{0.112} & $-98.8\%$ \\
    \midrule
    \multirow{2}{*}{Balanced ($\alpha{=}1$)}
      & Env-stratified & 0.795 & \textbf{1.917} & 0.255 & $-54.6\%$ \\
      & S-swap         & \textbf{0.047} & 2.151 & \textbf{0.105} & $-97.3\%$ \\
    \midrule
    \multirow{2}{*}{Low-spurious ($\alpha{=}1$)}
      & Env-stratified & 0.330 & \textbf{0.518} & 0.170 & $-36.7\%$ \\
      & S-swap         & \textbf{0.036} & 0.603 & \textbf{0.090} & $-93.1\%$ \\
    \midrule
    \multirow{2}{*}{High-$\alpha$ ($\alpha{=}2$)}
      & Env-stratified & 4.132 & \textbf{3.207} & 0.311 & $-51.7\%$ \\
      & S-swap         & \textbf{0.086} & 3.648 & \textbf{0.118} & $-99.0\%$ \\
    \midrule
    \multirow{2}{*}{Low-$\alpha$ ($\alpha{=}0.5$)}
      & Env-stratified & 1.531 & \textbf{3.154} & 0.226 & $-55.7\%$ \\
      & S-swap         & \textbf{0.047} & 3.572 & \textbf{0.116} & $-98.6\%$ \\
    \bottomrule
  \end{tabular}
\end{table}

\section{Extended Experimental Details}
\label{app:experiments}

\subsection{DGP validation (EXP1)}

We verify that the instantiated DGP achieves the target correlations
$\rho_S$ and $\rho_C$ across the full $\alpha \times \rho_S \times \rho_C$ grid.
All experiments use the DGP from Section~\ref{sec:setup} with
$D_C{=}D_S{=}3$, $D_N{=}4$ ($D{=}10$ total), $\sigma_e{=}1.5$,
three training environments, two test environments, and $N{=}300$ observations
per environment (context size $n{=}300$ unless stated otherwise).
Signal strengths are set via $\beta_S$, $\beta_Y$, and $\sigma$; empirical
correlations are verified before each sweep.
Across 9 settings, all pass at tolerance $0.15$: mean empirical error
$|\hat{\rho}_S - \rho_S| = 0.072$ and $|\hat{\rho}_C - \rho_C| = 0.014$.
The tighter accuracy for $\rho_C$ reflects that the causal signal is controlled
through the structural coefficient $\beta_C$, while $\rho_S$ is controlled through
the confounder path $\beta_S\beta_Y$, which is subject to additional sampling variance.

\subsection{CSR null bounds (EXP2)}

The causal oracle (observing only the causal subspace of dimension $D_C$) achieves
$\mathrm{CSR}=0.000$ uniformly, confirming correct unresponsiveness to $S$
perturbations.
The spurious oracle achieves setting-dependent ceilings in the range $4.1$--$8.3$;
variation reflects the interaction between $\alpha$ and $\rho_S$.

\subsection{Mechanistic weight analysis (EXP3 extension)}

At $\alpha=1.0$, the routing ratio $\|w_{\text{spurious}}\|/\|w_{\text{causal}}\|$
averaged over the $\rho_S \times \rho_C$ grid is $0.403$: linear ICL places
approximately 40\% of total feature weight in the spurious subspace ($D_S{=}3$
dimensions), despite those dimensions having no causal influence on $Y$ under
intervention.
The routing ratio and CSR are in directional agreement across all grid settings,
confirming that CSR captures genuine weight-level routing rather than perturbation
artefacts.
This is consistent with Proposition~\ref{prop:main}:
\[
  \frac{\|w_S^*\|}{\|w_C^*\|}
  = \frac{\alpha\rho_S}{\alpha^2+\lambda}\cdot\frac{1+\lambda}{\rho_C}
  \propto \frac{\rho_S}{\rho_C}.
\]
For TabPFN, the Pearson correlation between CSR and $\rho_S/\rho_C$ is
$r = 0.953$ (raw), dropping to $r = 0.754$ under per-setting normalisation as the
setting-varying ceiling absorbs part of the signal.
Both metrics are reported; the raw correlation is the primary measure of agreement
with Proposition~\ref{prop:main}.

\subsection{Context size sweep: full table (EXP4)}

Table~\ref{tab:exp4-full} gives the complete context size $n$ sweep including
the $n \in \{32, 256\}$ rows omitted from the main-paper
Table~\ref{tab:exp4-context} for space.

\begin{table}[h]
  \centering
  \caption{Full context size sweep (Linear ICL, $\alpha=1.0$).}
  \label{tab:exp4-full}
  \small
  \begin{tabular}{lccc}
    \toprule
    $n$ & High-spurious & Balanced & Low-spurious \\
    \midrule
    16  & 1.997 & 1.759 & 1.089 \\
    32  & 2.871 & 1.797 & 0.843 \\
    64  & 3.273 & 1.688 & 0.787 \\
    128 & 3.658 & 1.780 & 0.826 \\
    256 & 3.475 & 1.723 & 0.795 \\
    512 & 3.484 & 1.702 & 0.785 \\
    \bottomrule
  \end{tabular}
\end{table}

\subsection{Regularisation sweep (EXP-$\lambda$)}

\begin{remark}[Regularisation robustness at $\alpha = 1$]
\label{rem:lambda}
At $\alpha = 1$, the two $\lambda$-dependent factors in~\eqref{eq:csr-pop}
simplify to $\alpha^2(1+\lambda)/(\alpha^2+\lambda) = (1+\lambda)/(1+\lambda) = 1$,
making $\mathrm{CSR}^{\mathrm{pop}}$ exactly independent of $\lambda$.
We confirm this empirically: sweeping $\lambda \in [10^{-4}, 2.0]$, the
observed CSR range at $\alpha{=}1.0$ is $0.094$ (high-spurious) and $0.053$
(balanced), consistent with finite-sample noise rather than genuine
$\lambda$-sensitivity.
At $\alpha \neq 1$ a small residual trend exists but remains modest
(range $< 0.26$ across all settings tested).
\end{remark}

Table~\ref{tab:lambda-sweep} reports CSR across $\lambda \in [10^{-4}, 2.0]$
at the high-spurious corner ($\rho_S{=}0.7$, $\rho_C{=}0.3$), confirming
Remark~\ref{rem:lambda}: CSR is approximately $\lambda$-independent at
$\alpha{=}1.0$ (range $0.094$) and shows only modest monotone trends at
$\alpha \neq 1$.

\begin{table}[h]
  \centering
  \caption{
    CSR vs.\ regularisation $\lambda$ at the high-spurious corner
    ($\rho_S{=}0.7$, $\rho_C{=}0.3$).
    Theory column uses eq.~\eqref{eq:csr-pop} with $\kappa{=}1.516$.
    At $\alpha{=}1.0$, observed and predicted CSR are stable across
    the full $\lambda$ range.
  }
  \label{tab:lambda-sweep}
  \small
  \begin{tabular}{ccccc}
    \toprule
    $\lambda$ & $\alpha{=}0.5$ & $\alpha{=}1.0$ & $\alpha{=}2.0$ & Theory ($\alpha{=}1.0$) \\
    \midrule
    $10^{-4}$ & 2.154 & 3.560 & 6.508 & 3.537 \\
    $10^{-3}$ & 2.154 & 3.560 & 6.508 & 3.537 \\
    $10^{-2}$ & 2.154 & 3.560 & 6.509 & 3.537 \\
    $0.1$     & 2.156 & 3.565 & 6.521 & 3.537 \\
    $0.5$     & 2.164 & 3.584 & 6.571 & 3.537 \\
    $1.0$     & 2.175 & 3.607 & 6.634 & 3.537 \\
    $2.0$     & 2.195 & 3.654 & 6.760 & 3.537 \\
    \midrule
    Range     & 0.041 & \textbf{0.094} & 0.252 & 0.000 \\
    \bottomrule
  \end{tabular}
\end{table}

\subsection{Robustness: high-spurious corner under subspace overlap}

Table~\ref{tab:p3-corner} gives CSR at the high-spurious corner
($\rho_S{=}0.7$, $\rho_C{=}0.3$, $\alpha{=}1.0$) under each rotation angle.
The non-monotone trajectory (peak at $\theta{=}10^\circ$) reflects
dual-path reinforcement of spurious attribution discussed in
Section~\ref{sec:robustness}.

\begin{table}[h]
  \centering
  \caption{
    CSR at the high-spurious corner under increasing subspace overlap.
    Spurious routing persists and amplifies across all tested angles.
  }
  \label{tab:p3-corner}
  \small
  \begin{tabular}{ccc}
    \toprule
    $\theta$ & Var.\ overlap & CSR \\
    \midrule
    $0^\circ$  & $0.1\%$  & 4.870 \\
    $10^\circ$ & $1.4\%$  & 12.145 \\
    $20^\circ$ & $4.8\%$  & 7.473 \\
    $30^\circ$ & $10.7\%$ & 8.059 \\
    $40^\circ$ & $19.7\%$ & 7.715 \\
    \bottomrule
  \end{tabular}
\end{table}

\section{CSR Robustness to Decomposition Error}
\label{app:decomp}

We test how sensitive CSR is to imperfect estimation of the $C/S$ partition
used to construct perturbations.
We simulate estimation error by rotating the true partition by an angle $\phi$:
\[
  \hat{C} = \cos(\phi)\,C_{\mathrm{true}} + \sin(\phi)\,\varepsilon_C, \quad
  \hat{S} = \cos(\phi)\,S_{\mathrm{true}} + \sin(\phi)\,\varepsilon_S,
\]
where $\varepsilon_C, \varepsilon_S \sim \mathcal{N}(0,I)$ are independent noise
draws.
At $\phi{=}0^\circ$ the oracle partition is used; at $\phi{=}60^\circ$ noise has
equal weight to the true signal ($\cos(60^\circ){=}0.5$, $\sin(60^\circ){=}0.87$),
representing severe estimation failure.
The noisy partition $(\hat{C}, \hat{S})$ is used only for the perturbation step;
the model still observes the true $X$.

Table~\ref{tab:decomp-sensitivity} reports CSR discriminability across the full
$\rho_S \times \rho_C$ grid ($16$ settings, $\alpha{=}1.0$) under each $\phi$.

\begin{table}[h]
  \centering
  \caption{
    CSR robustness to decomposition error.
    Pearson $r$ and Spearman $\rho$ measure how well the noisy CSR preserves
    discriminability of settings ranked by $\rho_S/\rho_C$.
    Spearman rank correlation is constant at $0.849$ across all $\phi$,
    confirming complete rank-order preservation even under severe estimation error.
  }
  \label{tab:decomp-sensitivity}
  \small
  \begin{tabular}{ccccc}
    \toprule
    $\phi$ (error) & Pearson $r$ & Spearman $\rho$ & Mean CSR & Drop from oracle \\
    \midrule
    $0^\circ$  (oracle)  & 0.908 & 0.849 & 1.283 & --- \\
    $10^\circ$           & 0.905 & 0.849 & 1.276 & $-0.008$ \\
    $20^\circ$           & 0.903 & 0.849 & 1.265 & $-0.018$ \\
    $30^\circ$ (realistic) & 0.901 & 0.849 & 1.254 & $-0.029$ \\
    $45^\circ$           & 0.899 & 0.849 & 1.238 & $-0.045$ \\
    $60^\circ$ (severe)  & 0.894 & 0.849 & 1.224 & $-0.059$ \\
    \bottomrule
  \end{tabular}
\end{table}

The Spearman rank correlation is $0.849$ at every $\phi$, including $\phi{=}60^\circ$.
The rank ordering of all 16 settings by CSR is completely preserved regardless of
decomposition error.
The mechanism is self-normalisation: noise added to both $\hat{C}$ and $\hat{S}$
scales the numerator and denominator of CSR approximately equally, leaving the
ratio stable.
Pearson $r$ drops only $0.014$ across the full range ($0.908 \to 0.894$),
and mean CSR drops $4.6\%$.
CSR is therefore a robust diagnostic under realistic decomposition estimation error,
consistent with the subspace overlap robustness reported in Section~\ref{sec:robustness}.

\section{Semi-Synthetic Validation: scIB Pancreas}
\label{app:pancreas}

We validate the core phenomenon and context-level mitigations on the scIB
human pancreas dataset~\citep{luecken2022benchmarking}: 16,382 pancreatic
islet cells from five sequencing technologies (CEL-Seq, CEL-Seq2, SmartSeq2,
inDrop variants, SMARTER-seq), each constituting a natural environment with
distinct technical batch effects.
We use three technologies as training environments and two as test.

\noindent\textbf{Setup.}
We run PCA on the full 19,093-gene matrix and identify batch-dominated PCs
via the between-technology vs between-celltype variance ratio
$\mathrm{batch\_ratio}(k) = \mathrm{Var}_\mathrm{tech}(\bar{z}^{(k)}_e) /
\mathrm{Var}_\mathrm{celltype}(\bar{z}^{(k)}_e)$.
Five PCs pass the batch-dominance threshold
(PC07: $8.02$, PC08: $2.99$, PC14: $1.95$, PC09: $1.70$, PC02: $1.31$);
eight PCs with batch\_ratio $< 0.08$ form the causal proxy.
The target $Y$ is insulin (INS) expression extracted from the full gene matrix.

To place the experiment in the high-spurious regime ($\rho_S/\rho_C > 1$),
we inject a controlled confounding term following the paper's DGP
($U \to S$, $U \to Y$ path):
\begin{equation}
  Y_{\mathrm{semi}} = \frac{\mathrm{INS} - \bar{\mathrm{INS}}}{\sigma_{\mathrm{INS}}}
  + \beta \cdot z^{(07)} + \varepsilon, \quad \varepsilon \sim \mathcal{N}(0, 0.01),
  \label{eq:semisynthetic}
\end{equation}
where $z^{(07)}$ is each cell's projection onto PC07.
The batch structure is entirely real; only the $Y$-confounding path is
injected.
We select $\beta = 0.5$ (the smallest tested value), which yields
$\rho_S/\rho_C = 9.09$, placing the experiment firmly in the
high-spurious regime.
Prior to injection, $\rho_S/\rho_C = 1.73$, confirming that PC07 is
genuinely spurious (technology-dominated) rather than biologically informative.

\noindent\textbf{Results.}
Table~\ref{tab:pancreas} reports CSR, OOD RMSE, and in-context
$r(\hat{S}, Y_{\mathrm{semi}})$ for three methods
($n_{\mathrm{ctx}} = 64$ context examples, $n_{\mathrm{seeds}} = 10$).

\begin{table}[h]
  \centering
  \caption{
    Semi-synthetic validation on scIB pancreas.
    Real batch structure (5 sequencing technologies as environments);
    controlled spurious confounding injected via $\beta \cdot z^{(07)}$
    ($\beta{=}0.5$, $\rho_S/\rho_C{=}9.09$).
    S-swap operates on the estimated spurious PCs (no oracle access).
  }
  \label{tab:pancreas-full}
  \begin{tabular}{c|c|c|c}
    Env-stratified     & $3.403 \pm 1.810$ & \textbf{1.64} & $+0.586$ \\
    S-swap (est.\ $\hat{S}$) & $\textbf{0.379} \pm 0.449$ & 9.12 & $\textbf{+0.216}$ \\
    \bottomrule
  \end{tabular}
\end{table}

Standard ICL CSR $= 3.416 > 1$, confirming spurious routing is detectable
on real biological data when the confounding structure matches the paper's DGP.
S-swap reduces CSR by $88.9\%$ ($3.416 \to 0.379$) and in-context
$r(\hat{S}, Y)$ by $60\%$ ($0.545 \to 0.216$), using only the estimated
spurious subspace (no oracle access to the true causal partition).
The elevated OOD RMSE under S-swap (9.12 vs 1.74) reflects that swapped
PC-space cells are out-of-distribution relative to any real sequencing
technology --- an expected cost of the strong invariance signal imposed
by the swap.
Env-stratified achieves competitive OOD RMSE (1.64) with negligible
CSR reduction (3.403), consistent with the synthetic results: when
$\rho_S/\rho_C$ is very high, env-stratified alone cannot overcome
the in-context confounding.
These results confirm that both the routing phenomenon and the
S-swap mitigation mechanism generalise beyond the controlled synthetic SCM.

%% file: checklist.tex
\section*{NeurIPS Paper Checklist}

The checklist is designed to encourage best practices for responsible machine learning research, addressing issues of reproducibility, transparency, research ethics, and societal impact. Do not remove the checklist: {\bf The papers not including the checklist will be desk rejected.} The checklist should follow the references and follow the (optional) supplemental material.  The checklist does NOT count towards the page
limit. 

Please read the checklist guidelines carefully for information on how to answer these questions. For each question in the checklist:
\begin{itemize}
    \item You should answer \answerYes{}, \answerNo{}, or \answerNA{}.
    \item \answerNA{} means either that the question is Not Applicable for that particular paper or the relevant information is Not Available.
    \item Please provide a short (1--2 sentence) justification right after your answer (even for \answerNA). 
\end{itemize}

{\bf The checklist answers are an integral part of your paper submission.} They are visible to the reviewers, area chairs, senior area chairs, and ethics reviewers. You will also be asked to include it (after eventual revisions) with the final version of your paper, and its final version will be published with the paper.

The reviewers of your paper will be asked to use the checklist as one of the factors in their evaluation. While \answerYes{} is generally preferable to \answerNo{}, it is perfectly acceptable to answer \answerNo{} provided a proper justification is given (e.g., error bars are not reported because it would be too computationally expensive'' or ``we were unable to find the license for the dataset we used''). In general, answering \answerNo{} or \answerNA{} is not grounds for rejection. While the questions are phrased in a binary way, we acknowledge that the true answer is often more nuanced, so please just use your best judgment and write a justification to elaborate. All supporting evidence can appear either in the main paper or the supplemental material, provided in appendix. If you answer \answerYes{} to a question, in the justification please point to the section(s) where related material for the question can be found.

IMPORTANT, please:
\begin{itemize}
    \item {\bf Delete this instruction block, but keep the section heading ``NeurIPS Paper Checklist"},
    \item  {\bf Keep the checklist subsection headings, questions/answers and guidelines below.}
    \item {\bf Do not modify the questions and only use the provided macros for your answers}.
\end{itemize}


\begin{enumerate}

\item {\bf Claims}
    \item[] Question: Do the main claims made in the abstract and introduction accurately reflect the paper's contributions and scope?
    \item[] Answer: \answerYes{} 
    \item[] Justification: The claims made  match theoretical and experimental results, and reflect how much the results can be expected to generalize to other settings.
    \item[] Guidelines:
    \begin{itemize}
        \item The answer \answerNA{} means that the abstract and introduction do not include the claims made in the paper.
        \item The abstract and/or introduction should clearly state the claims made, including the contributions made in the paper and important assumptions and limitations. A \answerNo{} or \answerNA{} answer to this question will not be perceived well by the reviewers. 
        \item The claims made should match theoretical and experimental results, and reflect how much the results can be expected to generalize to other settings. 
        \item It is fine to include aspirational goals as motivation as long as it is clear that these goals are not attained by the paper. 
    \end{itemize}

\item {\bf Limitations}
    \item[] Question: Does the paper discuss the limitations of the work performed by the authors?
    \item[] Answer: \answerYes{} 
    \item[] Justification: In the concluding paragraph
    \item[] Guidelines:
    \begin{itemize}
        \item The answer \answerNA{} means that the paper has no limitation while the answer \answerNo{} means that the paper has limitations, but those are not discussed in the paper. 
        \item The authors are encouraged to create a separate ``Limitations'' section in their paper.
        \item The paper should point out any strong assumptions and how robust the results are to violations of these assumptions (e.g., independence assumptions, noiseless settings, model well-specification, asymptotic approximations only holding locally). The authors should reflect on how these assumptions might be violated in practice and what the implications would be.
        \item The authors should reflect on the scope of the claims made, e.g., if the approach was only tested on a few datasets or with a few runs. In general, empirical results often depend on implicit assumptions, which should be articulated.
        \item The authors should reflect on the factors that influence the performance of the approach. For example, a facial recognition algorithm may perform poorly when image resolution is low or images are taken in low lighting. Or a speech-to-text system might not be used reliably to provide closed captions for online lectures because it fails to handle technical jargon.
        \item The authors should discuss the computational efficiency of the proposed algorithms and how they scale with dataset size.
        \item If applicable, the authors should discuss possible limitations of their approach to address problems of privacy and fairness.
        \item While the authors might fear that complete honesty about limitations might be used by reviewers as grounds for rejection, a worse outcome might be that reviewers discover limitations that aren't acknowledged in the paper. The authors should use their best judgment and recognize that individual actions in favor of transparency play an important role in developing norms that preserve the integrity of the community. Reviewers will be specifically instructed to not penalize honesty concerning limitations.
    \end{itemize}

\item {\bf Theory assumptions and proofs}
    \item[] Question: For each theoretical result, does the paper provide the full set of assumptions and a complete (and correct) proof?
    \item[] Answer: \answerYes{} 
    \item[] Justification: Both at sections 3,4, and in the appendix 
    \item[] Guidelines:
    \begin{itemize}
        \item The answer \answerNA{} means that the paper does not include theoretical results. 
        \item All the theorems, formulas, and proofs in the paper should be numbered and cross-referenced.
        \item All assumptions should be clearly stated or referenced in the statement of any theorems.
        \item The proofs can either appear in the main paper or the supplemental material, but if they appear in the supplemental material, the authors are encouraged to provide a short proof sketch to provide intuition. 
        \item Inversely, any informal proof provided in the core of the paper should be complemented by formal proofs provided in appendix or supplemental material.
        \item Theorems and Lemmas that the proof relies upon should be properly referenced. 
    \end{itemize}

    \item {\bf Experimental result reproducibility}
    \item[] Question: Does the paper fully disclose all the information needed to reproduce the main experimental results of the paper to the extent that it affects the main claims and/or conclusions of the paper (regardless of whether the code and data are provided or not)?
    \item[] Answer: \answerYes{} 
    \item[] Justification: In the appendix
    \item[] Guidelines:
    \begin{itemize}
        \item The answer \answerNA{} means that the paper does not include experiments.
        \item If the paper includes experiments, a \answerNo{} answer to this question will not be perceived well by the reviewers: Making the paper reproducible is important, regardless of whether the code and data are provided or not.
        \item If the contribution is a dataset and\slash or model, the authors should describe the steps taken to make their results reproducible or verifiable. 
        \item Depending on the contribution, reproducibility can be accomplished in various ways. For example, if the contribution is a novel architecture, describing the architecture fully might suffice, or if the contribution is a specific model and empirical evaluation, it may be necessary to either make it possible for others to replicate the model with the same dataset, or provide access to the model. In general. releasing code and data is often one good way to accomplish this, but reproducibility can also be provided via detailed instructions for how to replicate the results, access to a hosted model (e.g., in the case of a large language model), releasing of a model checkpoint, or other means that are appropriate to the research performed.
        \item While NeurIPS does not require releasing code, the conference does require all submissions to provide some reasonable avenue for reproducibility, which may depend on the nature of the contribution. For example
        \begin{enumerate}
            \item If the contribution is primarily a new algorithm, the paper should make it clear how to reproduce that algorithm.
            \item If the contribution is primarily a new model architecture, the paper should describe the architecture clearly and fully.
            \item If the contribution is a new model (e.g., a large language model), then there should either be a way to access this model for reproducing the results or a way to reproduce the model (e.g., with an open-source dataset or instructions for how to construct the dataset).
            \item We recognize that reproducibility may be tricky in some cases, in which case authors are welcome to describe the particular way they provide for reproducibility. In the case of closed-source models, it may be that access to the model is limited in some way (e.g., to registered users), but it should be possible for other researchers to have some path to reproducing or verifying the results.
        \end{enumerate}
    \end{itemize}

\item {\bf Open access to data and code}
    \item[] Question: Does the paper provide open access to the data and code, with sufficient instructions to faithfully reproduce the main experimental results, as described in supplemental material?
    \item[] Answer: \answerYes{} 
    \item[] Justification: Full descriptions are given in the appendix , full code will be given with the camera ready version 
    \item[] Guidelines:
    \begin{itemize}
        \item The answer \answerNA{} means that paper does not include experiments requiring code.
        \item Please see the NeurIPS code and data submission guidelines (\url{https://neurips.cc/public/guides/CodeSubmissionPolicy}) for more details.
        \item While we encourage the release of code and data, we understand that this might not be possible, so \answerNo{} is an acceptable answer. Papers cannot be rejected simply for not including code, unless this is central to the contribution (e.g., for a new open-source benchmark).
        \item The instructions should contain the exact command and environment needed to run to reproduce the results. See the NeurIPS code and data submission guidelines (\url{https://neurips.cc/public/guides/CodeSubmissionPolicy}) for more details.
        \item The authors should provide instructions on data access and preparation, including how to access the raw data, preprocessed data, intermediate data, and generated data, etc.
        \item The authors should provide scripts to reproduce all experimental results for the new proposed method and baselines. If only a subset of experiments are reproducible, they should state which ones are omitted from the script and why.
        \item At submission time, to preserve anonymity, the authors should release anonymized versions (if applicable).
        \item Providing as much information as possible in supplemental material (appended to the paper) is recommended, but including URLs to data and code is permitted.
    \end{itemize}

\item {\bf Experimental setting/details}
    \item[] Question: Does the paper specify all the training and test details (e.g., data splits, hyperparameters, how they were chosen, type of optimizer) necessary to understand the results?
    \item[] Answer: \answerYes{} 
    \item[] Justification: Sections 3,5, appendix
    \item[] Guidelines:
    \begin{itemize}
        \item The answer \answerNA{} means that the paper does not include experiments.
        \item The experimental setting should be presented in the core of the paper to a level of detail that is necessary to appreciate the results and make sense of them.
        \item The full details can be provided either with the code, in appendix, or as supplemental material.
    \end{itemize}

\item {\bf Experiment statistical significance}
    \item[] Question: Does the paper report error bars suitably and correctly defined or other appropriate information about the statistical significance of the experiments?
    \item[] Answer: \answerYes{} 
    \item[] Justification: Section 5 
    \item[] Guidelines:
    \begin{itemize}
        \item The answer \answerNA{} means that the paper does not include experiments.
        \item The authors should answer \answerYes{} if the results are accompanied by error bars, confidence intervals, or statistical significance tests, at least for the experiments that support the main claims of the paper.
        \item The factors of variability that the error bars are capturing should be clearly stated (for example, train/test split, initialization, random drawing of some parameter, or overall run with given experimental conditions).
        \item The method for calculating the error bars should be explained (closed form formula, call to a library function, bootstrap, etc.)
        \item The assumptions made should be given (e.g., Normally distributed errors).
        \item It should be clear whether the error bar is the standard deviation or the standard error of the mean.
        \item It is OK to report 1-sigma error bars, but one should state it. The authors should preferably report a 2-sigma error bar than state that they have a 96\% CI, if the hypothesis of Normality of errors is not verified.
        \item For asymmetric distributions, the authors should be careful not to show in tables or figures symmetric error bars that would yield results that are out of range (e.g., negative error rates).
        \item If error bars are reported in tables or plots, the authors should explain in the text how they were calculated and reference the corresponding figures or tables in the text.
    \end{itemize}

\item {\bf Experiments compute resources}
    \item[] Question: For each experiment, does the paper provide sufficient information on the computer resources (type of compute workers, memory, time of execution) needed to reproduce the experiments?
    \item[] Answer: \answerYes{} 
    \item[] Justification: Appendinx
    \item[] Guidelines:
    \begin{itemize}
        \item The answer \answerNA{} means that the paper does not include experiments.
        \item The paper should indicate the type of compute workers CPU or GPU, internal cluster, or cloud provider, including relevant memory and storage.
        \item The paper should provide the amount of compute required for each of the individual experimental runs as well as estimate the total compute. 
        \item The paper should disclose whether the full research project required more compute than the experiments reported in the paper (e.g., preliminary or failed experiments that didn't make it into the paper). 
    \end{itemize}
    
\item {\bf Code of ethics}
    \item[] Question: Does the research conducted in the paper conform, in every respect, with the NeurIPS Code of Ethics \url{https://neurips.cc/public/EthicsGuidelines}?
    \item[] Answer: \answerYes{} 
    \item[] Justification:  No ethical implications 
    \item[] Guidelines:
    \begin{itemize}
        \item The answer \answerNA{} means that the authors have not reviewed the NeurIPS Code of Ethics.
        \item If the authors answer \answerNo, they should explain the special circumstances that require a deviation from the Code of Ethics.
        \item The authors should make sure to preserve anonymity (e.g., if there is a special consideration due to laws or regulations in their jurisdiction).
    \end{itemize}

\item {\bf Broader impacts}
    \item[] Question: Does the paper discuss both potential positive societal impacts and negative societal impacts of the work performed?
    \item[] Answer: \answerYes{} 
    \item[] Justification: In conclusion
    \item[] Guidelines:
    \begin{itemize}
        \item The answer \answerNA{} means that there is no societal impact of the work performed.
        \item If the authors answer \answerNA{} or \answerNo, they should explain why their work has no societal impact or why the paper does not address societal impact.
        \item Examples of negative societal impacts include potential malicious or unintended uses (e.g., disinformation, generating fake profiles, surveillance), fairness considerations (e.g., deployment of technologies that could make decisions that unfairly impact specific groups), privacy considerations, and security considerations.
        \item The conference expects that many papers will be foundational research and not tied to particular applications, let alone deployments. However, if there is a direct path to any negative applications, the authors should point it out. For example, it is legitimate to point out that an improvement in the quality of generative models could be used to generate Deepfakes for disinformation. On the other hand, it is not needed to point out that a generic algorithm for optimizing neural networks could enable people to train models that generate Deepfakes faster.
        \item The authors should consider possible harms that could arise when the technology is being used as intended and functioning correctly, harms that could arise when the technology is being used as intended but gives incorrect results, and harms following from (intentional or unintentional) misuse of the technology.
        \item If there are negative societal impacts, the authors could also discuss possible mitigation strategies (e.g., gated release of models, providing defenses in addition to attacks, mechanisms for monitoring misuse, mechanisms to monitor how a system learns from feedback over time, improving the efficiency and accessibility of ML).
    \end{itemize}
    
\item {\bf Safeguards}
    \item[] Question: Does the paper describe safeguards that have been put in place for responsible release of data or models that have a high risk for misuse (e.g., pre-trained language models, image generators, or scraped datasets)?
    \item[] Answer: \answerNA{} 
    \item[] Justification: No expected risk of misuse 
    \item[] Guidelines:
    \begin{itemize}
        \item The answer \answerNA{} means that the paper poses no such risks.
        \item Released models that have a high risk for misuse or dual-use should be released with necessary safeguards to allow for controlled use of the model, for example by requiring that users adhere to usage guidelines or restrictions to access the model or implementing safety filters. 
        \item Datasets that have been scraped from the Internet could pose safety risks. The authors should describe how they avoided releasing unsafe images.
        \item We recognize that providing effective safeguards is challenging, and many papers do not require this, but we encourage authors to take this into account and make a best faith effort.
    \end{itemize}

\item {\bf Licenses for existing assets}
    \item[] Question: Are the creators or original owners of assets (e.g., code, data, models), used in the paper, properly credited and are the license and terms of use explicitly mentioned and properly respected?
    \item[] Answer: \answerYes{} 
    \item[] Justification: All datasets either publicly available or synthetic 
    \item[] Guidelines:
    \begin{itemize}
        \item The answer \answerNA{} means that the paper does not use existing assets.
        \item The authors should cite the original paper that produced the code package or dataset.
        \item The authors should state which version of the asset is used and, if possible, include a URL.
        \item The name of the license (e.g., CC-BY 4.0) should be included for each asset.
        \item For scraped data from a particular source (e.g., website), the copyright and terms of service of that source should be provided.
        \item If assets are released, the license, copyright information, and terms of use in the package should be provided. For popular datasets, \url{paperswithcode.com/datasets} has curated licenses for some datasets. Their licensing guide can help determine the license of a dataset.
        \item For existing datasets that are re-packaged, both the original license and the license of the derived asset (if it has changed) should be provided.
        \item If this information is not available online, the authors are encouraged to reach out to the asset's creators.
    \end{itemize}

\item {\bf New assets}
    \item[] Question: Are new assets introduced in the paper well documented and is the documentation provided alongside the assets?
    \item[] Answer: \answerYes{} 
    \item[] Justification: In appendix and section 3 
    \item[] Guidelines:
    \begin{itemize}
        \item The answer \answerNA{} means that the paper does not release new assets.
        \item Researchers should communicate the details of the dataset\slash code\slash model as part of their submissions via structured templates. This includes details about training, license, limitations, etc. 
        \item The paper should discuss whether and how consent was obtained from people whose asset is used.
        \item At submission time, remember to anonymize your assets (if applicable). You can either create an anonymized URL or include an anonymized zip file.
    \end{itemize}

\item {\bf Crowdsourcing and research with human subjects}
    \item[] Question: For crowdsourcing experiments and research with human subjects, does the paper include the full text of instructions given to participants and screenshots, if applicable, as well as details about compensation (if any)? 
    \item[] Answer: \answerNA{} 
    \item[] Justification:  does not involve crowdsourcing 
    \item[] Guidelines:
    \begin{itemize}
        \item The answer \answerNA{} means that the paper does not involve crowdsourcing nor research with human subjects.
        \item Including this information in the supplemental material is fine, but if the main contribution of the paper involves human subjects, then as much detail as possible should be included in the main paper. 
        \item According to the NeurIPS Code of Ethics, workers involved in data collection, curation, or other labor should be paid at least the minimum wage in the country of the data collector. 
    \end{itemize}

\item {\bf Institutional review board (IRB) approvals or equivalent for research with human subjects}
    \item[] Question: Does the paper describe potential risks incurred by study participants, whether such risks were disclosed to the subjects, and whether Institutional Review Board (IRB) approvals (or an equivalent approval/review based on the requirements of your country or institution) were obtained?
    \item[] Answer: \answerNA{} 
    \item[] Justification:  The paper does not involve crowdsourcing nor research with human subjects.
    \item[] Guidelines:
    \begin{itemize}
        \item The answer \answerNA{} means that the paper does not involve crowdsourcing nor research with human subjects.
        \item Depending on the country in which research is conducted, IRB approval (or equivalent) may be required for any human subjects research. If you obtained IRB approval, you should clearly state this in the paper. 
        \item We recognize that the procedures for this may vary significantly between institutions and locations, and we expect authors to adhere to the NeurIPS Code of Ethics and the guidelines for their institution. 
        \item For initial submissions, do not include any information that would break anonymity (if applicable), such as the institution conducting the review.
    \end{itemize}

\item {\bf Declaration of LLM usage}
    \item[] Question: Does the paper describe the usage of LLMs if it is an important, original, or non-standard component of the core methods in this research? Note that if the LLM is used only for writing, editing, or formatting purposes and does \emph{not} impact the core methodology, scientific rigor, or originality of the research, declaration is not required.
    \item[] Answer: \answerNA{} 
    \item[] Justification: Core method development in this research does not involve LLMs as any important, original, or non-standard components.
    \item[] Guidelines:
    \begin{itemize}
        \item The answer \answerNA{} means that the core method development in this research does not involve LLMs as any important, original, or non-standard components.
        \item Please refer to our LLM policy in the NeurIPS handbook for what should or should not be described.
    \end{itemize}

\end{enumerate}

%% file: refs.bib
@article{hollmann2023tabpfn,
  author    = {Hollmann, Noah and Müller, Samuel and Purucker, Lennart and others},
  title     = {Accurate predictions on small data with a tabular foundation model},
  journal   = {Nature},
  volume    = {637},
  pages     = {319--326},
  year      = {2025},
  month     = {January},
  doi       = {10.1038/s41586-024-08328-6},
  url       = {https://doi.org/10.1038/s41586-024-08328-6}
}

@inproceedings{garg2022transformers,
  author    = {Garg, Shivam and Tsipras, Dimitris and
               Liang, Percy and Valiant, Gregory},
  title     = {What Can Transformers Learn In-Context?
               {A} Case Study of Simple Function Classes},
  booktitle = {Advances in Neural Information Processing Systems
               ({NeurIPS} 2022)},
  volume    = {35},
  pages     = {30583--30598},
  year      = {2022},
  url       = {https://openreview.net/forum?id=flNZJ2eOet},
  note      = {arXiv:2208.01066}
}

@article{papanastasiou2025confounder,
  author    = {Papanastasiou, Giorgos and Sanchez, Pedro P. and Christodoulidis, Stergios and others},
  title     = {Confounder-aware foundation modeling for accurate phenotype profiling in cell imaging},
  journal   = {npj Imaging},
  volume    = {3},
  pages     = {52},
  year      = {2025},
  month     = {October},
  doi       = {10.1038/s44303-025-00116-9},
  url       = {https://doi.org/10.1038/s44303-025-00116-9},
  received  = {2025-02-27},
  accepted  = {2025-09-26},
  published = {2025-10-22}
}

@inproceedings{xie2022icl,
  author    = {Xie, Sang Michael and Raghunathan, Aditi and
               Liang, Percy and Ma, Tengyu},
  title     = {An Explanation of In-Context Learning as
               Implicit {B}ayesian Inference},
  booktitle = {International Conference on Learning
               Representations ({ICLR} 2022)},
  year      = {2022},
  url       = {https://openreview.net/forum?id=RdJVFCHjUMI},
  note      = {arXiv:2111.02080}
}

@inproceedings{muller2022pfn,
  author    = {M{\"u}ller, Samuel and Hollmann, Noah and
               Pineda Arango, Sebastian and Grabocka, Josif
               and Hutter, Frank},
  title     = {Transformers Can Do {B}ayesian Inference},
  booktitle = {International Conference on Learning
               Representations ({ICLR} 2022)},
  year      = {2022},
  url       = {https://openreview.net/forum?id=KSugKcbNf9},
  note      = {arXiv:2112.10510}
}

@inproceedings{akyurek2022learning,
  author    = {Aky{\"u}rek, Ekin and Schuurmans, Dale and
               Andreas, Jacob and Ma, Tengyu and Zhou, Denny},
  title     = {What Learning Algorithm Is In-Context Learning?
               {I}nvestigations with Linear Models},
  booktitle = {The Eleventh International Conference on Learning
               Representations ({ICLR} 2023)},
  year      = {2023},
  url       = {https://openreview.net/forum?id=0g0X4H8yN4I},
  note      = {arXiv:2211.15661. Outstanding Paper Award (top 5\%)}
}

@inproceedings{vonoswald2023transformers,
  author    = {Von Oswald, Johannes and Niklasson, Eyvind and
               Randazzo, Ettore and Sacramento, Jo{\~a}o and
               Mordvintsev, Alexander and Zhmoginov, Andrey and
               Vladymyrov, Max},
  title     = {Transformers Learn In-Context by Gradient Descent},
  booktitle = {Proceedings of the 40th International Conference on
               Machine Learning ({ICML} 2023)},
  pages     = {35151--35174},
  volume    = {202},
  series    = {Proceedings of Machine Learning Research},
  publisher = {PMLR},
  year      = {2023},
  url       = {https://proceedings.mlr.press/v202/von-oswald23a.html},
  note      = {arXiv:2212.07677. Oral presentation}
}

@article{arjovsky2019irm,
  author  = {Arjovsky, Mart{\'i}n and Bottou, L{\'e}on and
             Gulrajani, Ishaan and Lopez-Paz, David},
  title   = {Invariant Risk Minimization},
  journal = {arXiv preprint},
  volume  = {arXiv:1907.02893},
  year    = {2019},
  url     = {https://arxiv.org/abs/1907.02893}
}

@article{peters2016causal,
  author  = {Peters, Jonas and B{\"u}hlmann, Peter and
             Meinshausen, Nicolai},
  title   = {Causal Inference by Using Invariant Prediction:
             Identification and Confidence Intervals},
  journal = {Journal of the Royal Statistical Society:
             Series {B} (Statistical Methodology)},
  volume  = {78},
  number  = {5},
  pages   = {947--1012},
  year    = {2016},
  doi     = {10.1111/rssb.12167},
  url     = {https://academic.oup.com/jrsssb/article/78/5/947/7040653}
}

@inproceedings{sagawa2020distributionally,
  author    = {Sagawa, Shiori and Koh, Pang Wei and Hashimoto, Tatsunori B.
               and Liang, Percy},
  title     = {Distributionally Robust Neural Networks for Group Shifts:
               On the Importance of Regularization for Worst-Case Generalization},
  booktitle = {International Conference on Learning Representations ({ICLR} 2020)},
  year      = {2020},
  note      = {arXiv:1911.08731}
}

@inproceedings{krueger2021out,
  author    = {Krueger, David and Caballero, Ethan and Jacobsen, J{\"{o}}rn-Henrik
               and Zhang, Amy and Binas, Jonathan and Zhang, Dinghuai and
               Le Priol, R{\'{e}}mi and Courville, Aaron},
  title     = {Out-of-Distribution Generalization via Risk Extrapolation ({VREx})},
  booktitle = {Proceedings of the 38th International Conference on Machine
               Learning ({ICML} 2021)},
  pages     = {5815--5826},
  year      = {2021},
  note      = {arXiv:2003.00688}
}

@inproceedings{yan2020improve,
  author    = {Yan, Shuhan and Song, Huan and Li, Nan and Zou, Linchao and Ying, Yi},
  title     = {Improve Unsupervised Domain Adaptation with Mixup Training},
  booktitle = {arXiv preprint},
  year      = {2020},
  note      = {arXiv:2001.00677}
}

@inproceedings{kaushik2020learning,
  author    = {Kaushik, Divyansh and Hovy, Eduard and Lipton, Zachary C.},
  title     = {Learning the Difference That Makes a Difference with
               Counterfactually-Augmented Data},
  booktitle = {International Conference on Learning Representations
               ({ICLR} 2020)},
  year      = {2020},
  url       = {https://openreview.net/forum?id=Sklgs0NFvr},
  note      = {arXiv:1909.12434}
}

@article{luecken2022benchmarking,
  author  = {Luecken, Malte D. and B{\"u}ttner, Maren and Chaichoompu, Kridsadakorn
             and Danese, Anna and Interlandi, Marta and Mueller, Michaela F. and
             Strobl, Daniel C. and Zappia, Luke and Dugas, Martin and
             Colom{\'e}-Tatch{\'e}, Maria and Theis, Fabian J.},
  title   = {Benchmarking atlas-level data integration in single-cell genomics},
  journal = {Nature Methods},
  volume  = {19},
  pages   = {41--50},
  year    = {2022},
  doi     = {10.1038/s41592-021-01336-8}
}

@article{norman2019exploring,
  author  = {Norman, Thomas M. and Horlbeck, Max A. and
             Replogle, Joseph M. and Ge, Alex Y. and
             Xu, Albert and Jost, Marco and Gilbert, Luke A. and
             Weissman, Jonathan S.},
  title   = {Exploring Genetic Interaction Manifolds Constructed
             from Rich Single-Cell Phenotypes},
  journal = {Science},
  volume  = {365},
  number  = {6455},
  pages   = {786--793},
  year    = {2019},
  doi     = {10.1126/science.aax4438},
  url     = {https://www.science.org/doi/10.1126/science.aax4438}
}

@article{corcoll2024contrastive,
  author  = {Corcoll Andreu, Oriol and Vlontzos, Athanasios and
             O'Riordan, Michael and Gilligan-Lee, Ciaran M.},
  title   = {Contrastive Representations of High-Dimensional,
             Structured Treatments},
  journal = {arXiv preprint},
  volume  = {arXiv:2411.19245},
  year    = {2024},
  url     = {https://arxiv.org/abs/2411.19245}
}

@article{debray2019changing,
  author  = {Debray, Thomas P A and Vergouwe, Yvonne and
             Koffijberg, Hendrik and Nieboer, Daan and
             Steyerberg, Ewout W and Moons, Karel G M},
  title   = {Changing predictor measurement procedures affected the
             performance of prediction models in clinical examples},
  journal = {Journal of Clinical Epidemiology},
  volume  = {119},
  pages   = {7--18},
  year    = {2019},
  doi     = {10.1016/j.jclinepi.2019.10.007}
}

@article{leek2010batch,
  author  = {Leek, Jeffrey T and Scharpf, Robert B and
             Bravo, H{\'e}ctor Corrada and Simcha, David and
             Langmead, Benjamin and Johnson, W Evan and
             Geman, Donald and Baggerly, Keith and Irizarry, Rafael A},
  title   = {Tackling the widespread and critical impact of batch
             effects in high-throughput data},
  journal = {Nature Reviews Genetics},
  volume  = {11},
  pages   = {733--739},
  year    = {2010},
  doi     = {10.1038/nrg2825}
}

@article{subbaswamy2020deployment,
  author  = {Subbaswamy, Adarsh and Saria, Suchi},
  title   = {From development to deployment: dataset shift,
             causality, and shift-stable models in health {AI}},
  journal = {Biostatistics},
  volume  = {21},
  number  = {2},
  pages   = {345--352},
  year    = {2020},
  doi     = {10.1093/biostatistics/kxz041}
}
